\newcommand{\tikzAngleOfLine}{\tikz@AngleOfLine}
\def\tikz@AngleOfLine(#1)(#2)#3{%
\pgfmathanglebetweenpoints{%
\pgfpointanchor{#1}{center}}{%
\pgfpointanchor{#2}{center}}
\pgfmathsetmacro{#3}{\pgfmathresult}%
}
\DeclareMathOperator*{\argmax}{\arg\!\max}
\definecolor{expert}{HTML}{008000}
\definecolor{error}{HTML}{f96565}
\definecolor{learner}{HTML}{F79646}
\definecolor{perfblue}{RGB}{64, 114, 175}
\theoremstyle{plain}
\newtheorem{theorem}{Theorem}[section]
\newtheorem{lemma}[theorem]{Lemma}
\newtheorem{corollary}[theorem]{Corollary}
\theoremstyle{definition}
\newtheorem{definition}[theorem]{Definition}
\theoremstyle{remark}
\declaretheoremstyle[
headfont=\normalfont\itshape,
qed=\qedsymbol,
]{mypf}
\newcommand{\algcomment}[1]{\textcolor{perfblue}{\transparent{0.8}\small{\texttt{\textbf{//\hspace{2pt}#1}}}}}
\icmltitlerunning{Inverse Reinforcement Learning without Reinforcement Learning}
\begin{document}
\twocolumn[
\icmltitle{Inverse Reinforcement Learning without Reinforcement Learning}

\icmlsetsymbol{equal}{*}

\begin{icmlauthorlist}
\icmlauthor{Gokul Swamy}{cmu}
\icmlauthor{Sanjiban Choudhury}{cornell,aurora}
\icmlauthor{J. Andrew Bagnell}{aurora,cmu}
\icmlauthor{Zhiwei Steven Wu}{cmu}
\end{icmlauthorlist}

\icmlaffiliation{cmu}{Carnegie Mellon University}
\icmlaffiliation{cornell}{Cornell University}
\icmlaffiliation{aurora}{Aurora Innovation}

\icmlcorrespondingauthor{Gokul Swamy}{gswamy@cmu.edu}

\icmlkeywords{Machine Learning, ICML}

\vskip 0.3in
]

\printAffiliationsAndNotice{}  %

\begin{abstract}
Inverse Reinforcement Learning (IRL) is a powerful set of techniques for imitation learning that aims to learn a reward function that rationalizes expert demonstrations. Unfortunately, traditional IRL methods suffer from a computational weakness: they require repeatedly solving a hard reinforcement learning (RL) problem as a subroutine. This is counter-intuitive from the viewpoint of reductions: we have reduced the \textit{easier} problem of imitation learning to repeatedly solving the \textit{harder} problem of RL. Another thread of work has proved that access to the side-information of the distribution of states where a strong policy spends time can dramatically reduce the sample and computational complexities of solving an RL problem. In this work, we demonstrate for the first time a more informed imitation learning reduction where we utilize the state distribution of the expert to alleviate the global exploration component of the RL subroutine, providing an \textit{exponential} speedup in theory. In practice, we find that we are able to significantly speed up the prior art on continuous control tasks.
\end{abstract}

\section{Introduction}
 Inverse Reinforcement Learning (IRL), also known as Inverse Optimal Control \citep{kalman1964linear, bagnell2015invitation} or Structural Estimation \citep{rust1994structural}, is the problem of finding a reward function that \textit{rationalizes} (i.e. makes optimal) demonstrated behavior. Such approaches build on the lengthy history of trying to understand intelligent behavior \citep{muybridge1887animal} as approximate optimization of some cost function \citep{wolpert1995internal}. While economists \citep{rust1994structural} and cognitive scientists \citep{baker2009action} are often interested in analyzing the recovered reward function, it is more common in machine learning to view IRL algorithms as methods to \textit{imitate} \citep{ziebart2008maximum} or \textit{forecast} \cite{kitani2012activity} expert behavior. 
 
There are three key benefits to the IRL approach to imitation. The first is \textit{policy space structuring}: effectively, IRL reduces our (often large) policy class to just those policies that are (approximately) optimal under some member of our (relatively small) reward function class. %
\begin{figure}
     \centering
     \includegraphics[width=0.45\textwidth]{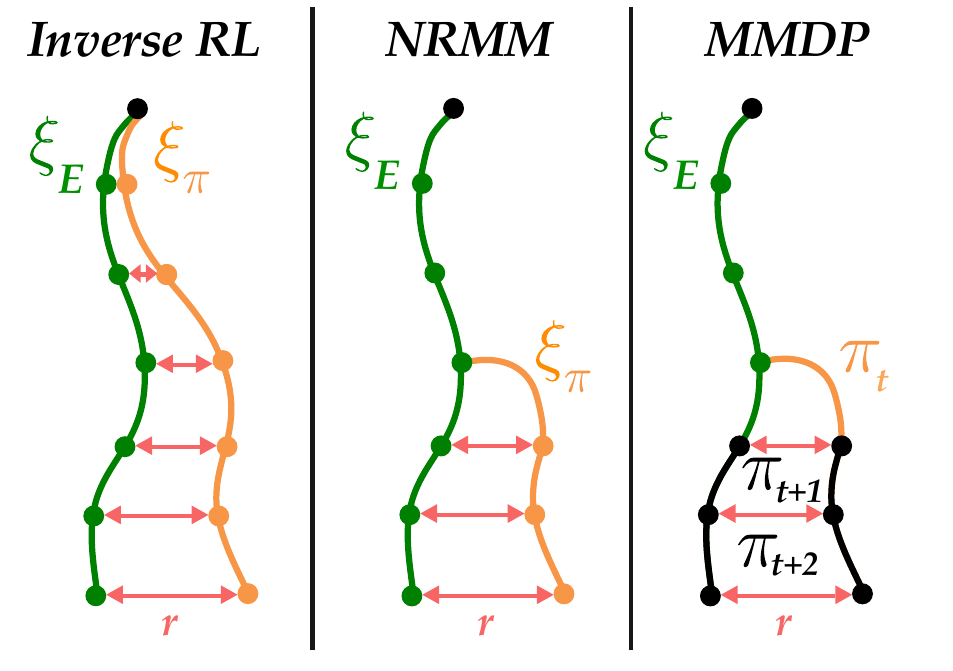}
     \caption{Traditional Inverse RL methods (left) repeatedly solve RL problems with adversarially chosen rewards in their inner loop which can be rather computationally expensive. We introduce two \textit{exponentially} faster methods for IRL. \texttt{NRMM} (No-Regret Moment Matching, center) resets the learner to states from the expert demonstrations before comparing trajectory suffixes. \texttt{MMDP} (Moment Matching by Dynamic Programming, right) optimizes a sequence of policies backwards in time. Both methods avoid solving the global exploration problem inherent in RL.}
     \label{fig:ffig}
 \end{figure}
The second is \textit{transfer across problems}: for many practical applications (e.g.\ robotics \citep{silver2010learning, ratliff2009learning, kolter2008control, ng2006autonomous, zucker2011optimization}, computer vision \citep{kitani2012activity}, and human-computer interaction \citep{ziebart2008navigate, ziebart2012probabilistic}), one is able to learn a \textit{single} reward function across multiple instances and then use it to forecast or imitate expert behavior in new problems that arise at test time. As \citet{ng2000algorithms} put it, ``the entire field of reinforcement learning is founded on the presupposition that the reward function, rather than the policy is the most succinct, robust, and \textit{transferable} definition of the task'' (italics ours). The third is \textit{robustness to compounding errors}: as IRL methods involve the learner performing rollouts in the environment, they cannot end up in states they didn't expect to at test time and therefore will not suffer from compounding errors \citep{swamy2021moments}. Taken together, these three reasons help explain why IRL methods continue to provide state-of-the-art results on challenging imitation learning problems (e.g. in autonomous driving \citep{bronstein2022hierarchical, igl2022symphony, vinitsky2022nocturne}).

The most widely used approaches to IRL \citep{ziebart2008maximum, ho2016generative} are fundamentally game-theoretic. An RL algorithm \textit{generates} trajectories by optimizing (i.e. \textit{decoding}) the current reward function. In response, a reward function selector picks a new reward function that \textit{discriminates} between learner and expert trajectories. As pointed out by \citet{finn2016guided}, the IRL setup generalizes a GAN \citep{goodfellow2020generative} with a dynamics model in the generation stem. More specifically, if one looks at the typical structure of an IRL algorithm, one performs the decoding-via-RL operation repeatedly in an \textit{inner loop}, tweaking the current estimate of the reward function in the \textit{outer loop} to produce behavior that more closely resembles that of the expert.

For some problems, highly optimized planners \citep{ratliff2009learning} or optimal controller synthesis procedures \citep{levine2013guided} allow an efficient implementation of this inner loop. More generally however, one might want to tackle problems that don't have efficient algorithms to decode behavior and therefore be forced to rely on sample-based RL algorithms. Unfortunately, this can make each inner loop iteration quite inefficient (both in terms of computational and sample efficiency) as it requires solving the \textit{global} exploration problem inherent in RL. From the lens of reductions \cite{beygelzimer2009tutorial}, such an approach is counter-intuitive as we've turned the relatively easy problem of imitating an expert into the repeated solving of the hard problem of RL.

Prior work \citep{kakade2002approximately, bagnell2003policy, ross2011} has shown that access to a good \textit{exploration distribution} (i.e. the states where a strong policy spends much of its time) can dramatically reduce the complexity of RL as the learner doesn't have to explore for as long: knowing a set of waypoints along the shortest path through a maze should speed up your attempt to solve it. In the imitation learning setup, we have access to just such a distribution: \textit{the expert's visitation distribution}. Our key insight is that \textbf{\textit{expert demonstrations can dramatically improve the efficiency of the RL subroutine of IRL}}. %

More explicitly, our contributions are as follows:

\noindent \textbf{1. We derive two algorithms that reset the learner to states from the expert visitation distribution for more efficient IRL.} \texttt{MMDP} (Moment Matching by Dynamic Programming) produces a sequence of policies. \texttt{NRMM} (No-Regret Moment Matching) produces a single, stationary policy. Both come in \textit{primal} and \textit{dual} variants.%

\noindent \textbf{2. We discuss the statistical complexity of expert resets.} We prove that in the worst case, traditional IRL algorithms take an \textit{exponential} number of interactions (in the horizon of the problem) to learn a policy competitive with the expert. In contrast, we prove that our algorithms require only \textit{polynomial} interactions per iteration to learn policies competitive with the expert. 

\noindent \textbf{3. We discuss the performance implications of expert resets.} We show that in the worst case, neither \texttt{MMDP} nor \texttt{NRMM} can avoid a quadratic compounding of errors with respect to the horizon. 

\noindent \textbf{4. We derive a practical meta-algorithm that achieves the best of both.} We propose \texttt{FILTER} (Fast Inverted Loop Training via Expert Resets) which interpolates between traditional IRL and our own approaches via mixing expert resets with standard resets. This allows use to ease the exploration burden on the learner while mitigating compounding errors. We implement \texttt{FILTER} on continuous control tasks and find it is more efficient than standard IRL.

We begin with a discussion of related work.

\section{Related Work}
Both of our algorithms build upon prior work in utilizing strong exploration distributions in the reinforcement learning context \citep{kakade2002approximately}. In a sense, we lift these insights to the imitation learning context. \texttt{MMDP} can be seen as the moment-matching version of the PSDP (Policy Search by Dynamic Programming) algorithm of \citet{bagnell2003policy}. \texttt{NRMM} calls the NRPI (No-Regret Policy Iteration) algorithm of \citet{ross2011} in each iteration. Recent work by \citet{uchendu2022jump} has confirmed that PSDP and NRPI continue to provide strong computational benefits with modern training algorithms and architectures, boding well for their application to IRL.

Our work is also related to recent developments in the theory of policy gradient algorithms by \citet{agarwal2021theory}, in that we also assume access to a reset distribution that covers the visitation distribution of the policy we compare the learner's to. While they compare to the optimal policy, we compare to the expert, as we are focused IRL.

\citet{swamy2022minimax} also consider sample efficiency in IRL, but focus on making the most out of a finite set of expert demonstrations, rather than solving the moment-matching problem with as few learner-environment interactions as possible. Our work is therefore complementary to theirs. 

Perhaps the most similar algorithm to \texttt{MMDP} is the FAIL algorithm of \citet{sun2019provably}. While both algorithms solve a sequence of moment-matching games, they differ in several key ways. Perhaps most obviously, FAIL is solving the sequence of games forward in time while \texttt{MMDP} is solving them backwards in time. This makes it straightforward to mix \texttt{MMDP} with value-based reinforcement learning (which also uses backwards-in-time dynamic programming), while it is not apparent how to do so for FAIL. %

\begin{algorithm}[t]
\begin{algorithmic}
\STATE {\bfseries Input:} Demos. $\mathcal{D}_E$, Policy class $\Pi$, Reward class $\mathcal{F}_r$
\STATE {\bfseries Output:} Trained policy $\pi$
\STATE Initialize $\pi_1 \in \Pi$
\FOR{$i$ in $1 \dots N$}
\STATE \algcomment{Use any no-regret algo to pick $f$}
\STATE $f_{i} \gets \underset{f \in \mathcal{F}_r}{\argmax } J(\pi_E, f) - J(\text{Unif}(\pi_{1:i}), f)+ R(f)$
\STATE $\pi_{i+1} \gets \texttt{MaxEntRL}(r=f_i)$

\ENDFOR
\STATE {\bfseries Return } $\pi_i$ with the lowest validation error.
\end{algorithmic}
\caption{\texttt{IRL} (Dual, \citet{ziebart2008maximum}) \label{alg:dual-irl}}
\end{algorithm}

An alternative way to use expert demonstrations in policy optimization is via regularization towards a trained behavioral cloning policy \citep{jacob2022modeling, tiapkin2023regularized}. The benefits of such a technique are quite problem-specific (e.g. such regularization could introduce compounding errors where none would exist otherwise). However, on problems where such regularization is helpful, it can easily be combined with the improved efficiency our techniques provide. %

Another line of work attempts to improve the efficiency of IRL algorithms by learning $Q$ functions and then differencing them across sequential states to extract a reward function, eliding the need for an inner loop \citep{dvijotham2010inverse, garg2021iq}. While reward functions don't include information about the dynamics of the environment, $Q$ functions do. This means that $Q$ function-based approaches to IRL need to spend computation and samples to learn the environment's dynamics, only to immediately filter them out, which seems rather inefficient and can introduce other errors from the secondary regression. Furthermore, while one might be able to apply traditional IRL methods on datasets collected from diverse agents solving tasks with similar goals \citep{silver2010learning, ratliff2009learning, kolter2008control, ng2006autonomous, zucker2011optimization, ziebart2008navigate, ziebart2012probabilistic}, it isn't clear that $Q$ function-based approaches would output consistent estimates of the expert's reward function if dynamics differ across environments.

\section{Expert Resets in Inverse RL}
We utilize the moment-matching framework of \citet{swamy2021moments} to prove performance bounds for our algorithms. Our results allow one to speed up \textit{any} member of the broad reward-moment-matching phylum of their taxonomy that uses RL (e.g. MaxEnt IRL \citep{ziebart2008maximum}, GAIL \citep{ho2016generative}, SQIL \citep{reddy2019sqil}).

\begin{algorithm}[t]
\begin{algorithmic}
\STATE {\bfseries Input:} Demos. $\mathcal{D}_E$, Policy class $\Pi$, Reward class $\mathcal{F}_r$
\STATE {\bfseries Output:} Trained policy $\pi$
\STATE Initialize $f_1 \in \mathcal{F}_r$
\FOR{$i$ in $1 \dots N$}
\STATE \algcomment{Use any no-regret algo to pick $\pi$}
\STATE $\pi_i \gets \texttt{MaxEntRL}(r = \frac{1}{i}\sum_{j=1}^i f_j)$
\STATE $f_{i+1} \gets \underset{f \in \mathcal{F}_r}{\argmax} J(\pi_E, f) - J(\pi_i, f)$
\ENDFOR
\STATE {\bfseries Return } $\pi_i$ with the lowest validation error.
\end{algorithmic}
\caption{\texttt{IRL} (Primal, \citet{syed2007game}) \label{alg:primal-irl}}
\end{algorithm}

\subsection{Inverse RL as (Inefficient) Game Solving}
Consider a finite-horizon Markov Decision Process (MDP) \citep{puterman2014markov} parameterized by $\langle \mathcal{S}, \mathcal{A}, \mathcal{T}, r, T \rangle$ where $\mathcal{S}$, $\mathcal{A}$ are the state and action spaces, $\mathcal{T}: \mathcal{S} \times \mathcal{A}\rightarrow \Delta(\mathcal{S})$ is the transition operator, $r: \mathcal{S} \times \mathcal{A} \rightarrow [-1, 1]$ is the reward function, and $T$ is the horizon. In the inverse RL setup, we see trajectories generated by an expert policy $\pi^E: \mathcal{S} \rightarrow \Delta(\mathcal{A})$, but do not know the reward function. Our goal is to nevertheless learn a policy that performs as well as the expert's, no matter the true reward function.

\begin{algorithm*}[t]
\begin{algorithmic}
\STATE {\bfseries Input:} Sequence of expert visitation distributions $\rho_{E}^1 \dots \rho_{E}^T$, Policy class $\Pi$, Reward class $\mathcal{F}_r$
\STATE {\bfseries Output:} Sequence of trained policies $\pi = \pi^{1:T}$
\FOR{$t$ in $T \dots 1$}
\STATE Set $\mathcal{D}_t = \{\}$
\FOR{$j=1$ to $M$}
\STATE Sample a random start state $s_t \sim \rho_{E}^t$.
\STATE Execute a random action $a_t \sim \text{Unif}(\mathcal{A})$ in $s_t$.
\STATE Follow $\pi^{t+1:T}$ until the end of the horizon.
\STATE $\mathcal{D}_t \gets \mathcal{D}_t \cup \{(s_t, a_t, s_{t+1:T}, a_{t+1:T})\}$
\ENDFOR
\STATE \algcomment{(Approximately) solve moment-matching game.}
\begin{equation}
    \pi^t \gets \arg\min_{\pi \in \Pi} \max_{f \in \mathcal{F}_r} \frac{1}{T} \left( \mathbb{E}_{\substack{s_t \sim \rho_{E}^t, \\ a_t \sim  \rho_E^t(s_t)}}\left[\mathbb{E}_{\mathcal{D}_t \vert s_t, a_t}\left[\sum_{\tau = t}^T f(s_{\tau}, a_{\tau})\right]\right] - \mathbb{E}_{\substack{s_t \sim \rho_{E}^t, \\ a_t \sim \pi(s_t)}}\left[\mathbb{E}_{\mathcal{D}_t \vert s_t, a_t}\left[\sum_{\tau = t}^T f(s_{\tau}, a_{\tau})\right]\right] \right) \label{eq:mmdp-payoff}
\end{equation}
\ENDFOR
\STATE {\bfseries Return } $\pi^{1:T}$.
\end{algorithmic}
\caption{\texttt{MMDP} (Moment Matching by Dynamic Programming): Primal \label{alg:mmdp}}
\end{algorithm*}

We solve the IRL problem via equilibrium computation between a policy player and an adversary that tries to pick out differences between expert and learner policies along certain moments (i.e. potential components of the reward function) \cite{swamy2021moments}. More formally, we optimize over (time-varying) policies $\pi = \{\pi_1, \dots, \pi_T\}$, with $\pi_t: \mathcal{S} \rightarrow \Delta(\mathcal{A}) \in \Pi$ and reward functions $f: \mathcal{S} \times \mathcal{A} \rightarrow [-1, 1] \in \mathcal{F}_r$. For simplicity, we assume that our strategy spaces ($\Pi$ and $\mathcal{F}_r$) are convex and compact, that $\mathcal{F}_r$ is closed under negation, and that $r \in \mathcal{F}_r, \pi_E \in \Pi$. \footnote{If we do not assume realizability, we would get the analogous agnostic bounds throughout the following sections.} We solve (i.e. compute an approximate Nash equilibrium) of the two-player zero sum game
\begin{equation}
    \min_{\pi \in \Pi} \max_{f \in \mathcal{F}_r} J(\pi_E, f) - J(\pi, f),
\end{equation}
where $J(\pi, f) = \mathbb{E}_{\xi \sim \pi}[\sum_{t=0}^T f(s_t, a_t)]$ denotes the value of policy $\pi$ under reward function $f$.

\citet{swamy2021moments} describe two different classes of strategies for equilibrium computation: \textit{primal}, where the policy player follows a no-regret strategy against a best-response discriminative player and \textit{dual}, where the discriminative player follows a no-regret strategy against a best-response policy player. Most IRL algorithms are dual (e.g. MaxEnt IRL \citep{ziebart2008maximum} or LEARCH \citep{ratliff2009learning}) but there do exist primal approaches (e.g. MWAL \citep{syed2007game}, GAIL \citep{ho2016generative}). For both classes of strategies, a best-response corresponds to an inner loop iteration, while a no-regret step corresponds to an outer loop iteration.

For the policy player, a best-response consists of solving the RL problem under the current adversarially chosen reward function, i.e.
\begin{equation}
   \pi_{i+1} = \argmax_{\pi \in \Pi} J(\pi, f_i) + H(\pi),
\end{equation}
while a no-regret step consists of running \textit{any} no-regret online learning algorithm over the history of rewards, \footnote{We write down a specific no-regret algorithm here (Follow the Regularized Leader \citep{mcmahan2011follow}) but one could use any other (e.g. Multiplicative Weights \citep{arora2012multiplicative} or Online Gradient Descent \citep{zinkevich2003online}) and have similar guarantees.} e.g. 
\begin{equation}
   \pi_{i+1} = \argmax_{\pi \in \Pi} J(\pi, \frac{1}{i}\sum_{j=0}^i f_j) + H(\pi),
\end{equation}
where $H(\pi)$ denotes the entropy of the policy. See Algorithms \ref{alg:dual-irl} and \ref{alg:primal-irl} for psuedocode, with $R(f)$ being a strongly convex regularizer. 

In both cases, one is solving a full RL problem at each iteration. This means that in the worst case, one pays exponentially in the horizon at each iteration \citep{kakade2003sample}:
\begin{theorem}{\textbf{Inverse RL Sample Complexity:}} For Algorithms \ref{alg:dual-irl} and \ref{alg:primal-irl}, there exists an MDP, $\pi_E$, $\Pi$, and $\mathcal{F}_r$ such that returning a policy $\pi$ which satisfies $J(\pi_E, r) - J(\pi, r) \leq 0.5 V_{max}$ requires $\Omega(|\mathcal{A}|^T)$ interactions with the environment, where $V_{max}$ is the value of the optimal policy. \hyperref[pf:thm:irl-exp]{[Proof]}
\label{thm:irl-exp}
\end{theorem}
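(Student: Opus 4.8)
The plan is to reduce the claim to the classical ``needle in a haystack'' lower bound for global exploration in reinforcement learning \citep{kakade2003sample}, and then argue that the structure of Algorithms~\ref{alg:dual-irl} and~\ref{alg:primal-irl} cannot circumvent it. I would take the hard instance to be a depth-$T$, $|\mathcal{A}|$-ary tree MDP: the start state is the root, from every internal node each action leads deterministically to a distinct child, every rollout terminates at a depth-$T$ leaf, and the reward is $r(s,a) = \mathds{1}[s = s^\star]$ for a single rewarding leaf $s^\star$. Let $\pi_E$ be the deterministic time-varying policy that walks from the root to $s^\star$, so $V_{max} = 1$, and let $\Pi$ be the set of all (time-varying) policies, which is convex, compact, and contains $\pi_E$. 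The key modeling choice is $\mathcal{F}_r$: take it to be the convex, compact set of all $[-1,1]$-valued rewards that depend only on which leaf (if any) the current state is. Then $r \in \mathcal{F}_r$, so realizability holds, but no member of $\mathcal{F}_r$ encodes any ``breadcrumb'' information about the path to $s^\star$. To get a genuine hardness statement I would work with the family $\{M_\ell\}_\ell$ indexed by the placement of the needle at leaf $\ell$ and, since the demonstrations will reveal the identity of $s^\star$, also randomize the tree's child-labeling within the family, so that recognizing $s^\star$ upon arrival confers no navigational shortcut.

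The argument then has four steps. \emph{(i)} In both algorithms the \emph{only} environment interactions are the rollouts inside \texttt{MaxEntRL} and the fresh learner rollouts used to estimate $J(\pi_i, f)$; the demonstrations $\mathcal{D}_E$ enter only through the numbers $J(\pi_E, f)$, and for $f \in \mathcal{F}_r$ these merely reveal, for each leaf $\ell$, whether $\ell = s^\star$ --- i.e.\ they pin down the \emph{identity} of $s^\star$ but not the action sequence reaching it, which lives in the unknown transition structure. \emph{(ii)} For the returned $\pi$ to satisfy $J(\pi_E, r) - J(\pi, r) \le 0.5\,V_{max}$ it must reach $s^\star$ with probability at least $\tfrac{1}{2}$, hence some iterate $\pi_i$ must, hence at least one environment rollout over the whole run must have visited $s^\star$ --- because a policy the algorithm synthesizes from only the identity of $s^\star$ and rollouts that never touched $s^\star$ is independent of the hidden path and so reaches $s^\star$ with probability $|\mathcal{A}|^{-T}$. \emph{(iii)} Before the first visit to $s^\star$ all reward feedback is $0$ and, under the randomized labeling, $s^\star$'s identity does not disambiguate the tree, so the algorithm's actions are independent of which leaf is the needle; since each rollout touches at most one leaf, a uniformly random needle is hit within $k$ rollouts with probability at most $k\,|\mathcal{A}|^{-T}$, and combining this with step \emph{(ii)} via Markov's inequality on $J(\pi,r)$ conditioned on no visit, the overall success probability is below $\tfrac{1}{2}$ once $k \le \tfrac{1}{4}|\mathcal{A}|^{T}$. \emph{(iv)} Averaging over $\ell$ produces a single MDP and expert on which any instantiation of the subroutines in Algorithm~\ref{alg:dual-irl} or Algorithm~\ref{alg:primal-irl} using fewer than $\tfrac14 |\mathcal{A}|^{T}$ total interactions fails to return a valid policy with probability exceeding $\tfrac{1}{2}$ --- an $\Omega(|\mathcal{A}|^T)$ bound. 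The primal version is covered verbatim, as it merely hands \texttt{MaxEntRL} the averaged reward $-\tfrac1i\sum_j f_j$, which still lies in the path-uninformative span of $\mathcal{F}_r$.

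The step I expect to be the crux is \emph{(ii)}--\emph{(iii)}: making precise that knowing \emph{where} the needle is --- which the expert demonstrations genuinely leak, since $\pi_E$ visits $s^\star$ --- does not help the learner get there faster than blind search. This is exactly what forces the two adversarial design choices: randomizing the transition structure inside the family (so recognizing $s^\star$ on arrival gives no shortcut) and restricting $\mathcal{F}_r$ to leaf-only rewards (so neither the adversary's reward $f_i$ nor the statistic $J(\pi_E,\cdot)$ can smuggle a breadcrumb trail into the \texttt{MaxEntRL} subroutine). One must also check that the outer-loop learner rollouts cannot leak the path to the reward player, which holds because before $s^\star$ is reached every $\pi_i$ avoids it and the reward update reads only the path-independent functionals $J(\cdot, f)$. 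The remaining pieces --- the ``at most one leaf per rollout'' counting and the averaging-over-the-family step --- are routine and essentially mirror the classical RL exploration lower bound of \citet{kakade2003sample} and the motivation for exploration distributions in \citet{kakade2002approximately, bagnell2003policy}.
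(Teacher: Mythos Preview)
Your construction is the same as the paper's---a depth-$T$, $|\mathcal{A}|$-ary tree with deterministic transitions and reward only at a single leaf---and both arguments ultimately reduce to the Kakade-style needle-in-a-haystack lower bound for the inner RL call. The paper's proof is considerably terser: it takes $\mathcal{F}_r$ to be the finite set of one-hot leaf indicators, $\Pi$ the deterministic root-to-leaf paths, and simply observes that each \texttt{MaxEntRL} invocation is handed a sparse reward and must therefore visit $\Omega(|\mathcal{A}|^T)$ states to locate the rewarding leaf (for the primal case it adds that the averaged reward $\tfrac{1}{i}\sum_j f_j$ can still concentrate on a single leaf, so the same bound applies). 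Where you go beyond the paper is in explicitly neutralizing the two information channels the paper leaves implicit: you randomize the child-labeling so that knowing the \emph{identity} of $s^\star$ (which both $\mathcal{D}_E$ and $f_i$ genuinely leak) confers no navigational advantage, and you argue carefully that the leaf-only restriction on $\mathcal{F}_r$ prevents the discriminator from encoding breadcrumbs into the reward passed to \texttt{MaxEntRL}. This extra care is not wasted---the paper's phrase ``the learner needs to visit all nodes in the tree to find which one provides reward'' elides the fact that the RL subroutine is \emph{told} $f_i$ and hence knows which leaf is rewarding; what it lacks is the action sequence, and your labeling randomization is exactly what makes that gap rigorous. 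The trade-off is that your four-step argument plus the Yao-style averaging is substantially heavier than the paper's two-paragraph sketch, and you need the family-of-MDPs framing rather than a single fixed instance; the paper simply leans on the reader's familiarity with the classical RL exploration lower bound under unknown dynamics.
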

We now discuss how we can utilize the (already known from the demonstrations) expert's visitation distribution to solve RL problems more efficiently.

\subsection{Method 1: Dynamic Programming}

Dynamic programming in the form of the Bellman Equation forms the basis of $Q$-learning based approaches to RL: one "backs-up" $Q$ values backwards-in-time, selecting actions based on the sum of the reward at the current timestep and the already computed value of the next state. More generally however, one can back-up \textit{policies} rather than just $Q$-values, as in the Policy Search by Dynamic Programming (PSDP) algorithm of \citet{bagnell2003policy}. Given some roll-in distribution $\nu$, the algorithm draws states from timestep $T$ and selects a policy 
\begin{equation}
    \pi_T = \argmax_{\pi \in \Pi} \mathbb{E}_{s \sim \nu^T}[r(s, \pi(s))].
\end{equation}
Then, holding this policy fixed, the algorithm draws states from the roll-in distribution at timestep $T-1$ and selects a policy for timestep $T-1$ that maximizes reward over the horizon,
\begin{equation}
    \pi_{T-1} = \argmax_{\pi \in \Pi} \mathbb{E}_{s \sim \nu^{T-1}}[r(s, \pi(s)) + r(s', \pi_T(s'))],
\end{equation}
where $s'$ denotes a successor state. This induction proceeds backwards in time until one reaches the first timestep, at which point a sequence of policies $\pi_{1:T}$ is output. Notice that at each step of this algorithm, we are solving a single-step classification problem. So, instead of the exponential-in-the-horizon complexity one must pay (in hard instances) for RL, one pays only \textit{quadratically} in the horizon. 

The careful reader will notice that PSDP requires a reward function. Two strategies come to mind for adversarially picking one for IRL. The first is to choose a reward for each \textit{timestep} (i.e. each $t \in [T]$) of PSDP. The second is to run PSDP to completion (i.e. solve for all $T$ policies) and then pick a new reward in an outer loop. We focus on the first, \textit{primal} strategy in the main text and defer the latter, \textit{dual} strategy to Appendix \ref{app:dual} for space reasons. We call the resulting algorithm \texttt{MMDP}: \textit{Moment Matching by Dynamic Programming} and outline the procedure in Algorithm \ref{alg:mmdp}.
Throughout our analysis, we define optimization error $\epsilon_t$ as the value when $\pi_t$ is plugged into Eq. \eqref{eq:mmdp-payoff}. Like PSDP, \texttt{MMDP} avoids the exponential sample complexity of RL.
\begin{lemma}{\textbf{\texttt{MMDP} Sample Complexity:}} Let $\epsilon > 0$. At iteration $t$, \texttt{MMDP} requires at most $$O\left(\log\left(\frac{|\Pi||\mathcal{F}_r|}{\delta}\right) \frac{T^3|\mathcal{A}|^2}{\epsilon^2}\right)$$ interactions with the environment to, w.p. $\geq 1 - \delta$, produce a policy $\pi_t$ with optimization error $\epsilon_t \leq \epsilon$ (Eq. \ref{eq:mmdp-payoff}). \hyperref[pf:thm:mmdp-sc]{[Proof]}
\label{thm:mmdp-sc}
\end{lemma}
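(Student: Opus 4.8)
The plan is to reduce the claim to a \textbf{uniform convergence} bound over the product strategy space $\Pi \times \mathcal{F}_r$ and then invoke the standard argument that an (approximately) optimal minimax solution of an \emph{empirical} game is an approximately optimal minimax solution of the \emph{true} game, using the expert $\pi_E$ as the comparator. Write $V_t(\pi, f)$ for the true per-timestep payoff inside Eq.~\eqref{eq:mmdp-payoff} and $\widehat{V}_t(\pi, f)$ for the empirical payoff that \texttt{MMDP} actually optimizes from the $M$ sampled suffixes in $\mathcal{D}_t$. The one modeling point to get right when forming $\widehat{V}_t$ is that the algorithm explores with a uniform action $a_t \sim \mathrm{Unif}(\mathcal{A})$ at timestep $t$ rather than with $\pi$; so the learner term must be built by \emph{importance weighting} each sampled suffix $(s_t, a_t, s_{t+1:T}, a_{t+1:T})$ by $|\mathcal{A}| \cdot \pi(a_t \mid s_t)$, which renders it an unbiased estimate of the $a_t \sim \pi(\cdot \mid s_t)$ expectation, whereas the expert term needs no reweighting since the algorithm already rolls in from $\rho_E^t$. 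Hence $\pi^t$ is, up to the ``approximately solve'' slack, $\argmin_{\pi \in \Pi}\max_{f \in \mathcal{F}_r}\widehat{V}_t(\pi, f)$.

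\textbf{Step 1 (concentration).} Each of the $M$ trajectories contributes one i.i.d. realization of a bounded random variable: with $f \in [-1,1]$, at most $T$ terms in the inner sum $\sum_{\tau=t}^{T} f(s_\tau,a_\tau)$, and an importance weight at most $|\mathcal{A}|$, the per-trajectory contribution to $\widehat{V}_t(\pi,f)$ has range controlled by $|\mathcal{A}|$ and the horizon. I would apply Hoeffding's inequality to bound $|\widehat{V}_t(\pi,f) - V_t(\pi,f)|$ for a fixed $(\pi,f)$ and then union bound over $\Pi \times \mathcal{F}_r$ — finite by assumption, or replaced by covering numbers, which is where the $\log(|\Pi||\mathcal{F}_r|/\delta)$ comes from — to get the uniform statement $\sup_{\pi,f}|\widehat{V}_t(\pi,f) - V_t(\pi,f)| \le \epsilon/2$ with probability $\ge 1-\delta$. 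Accounting for the horizon in the three places it enters — the length-$\le T$ reward sums and the $\tfrac1T$ rescaling in Eq.~\eqref{eq:mmdp-payoff}, and the at most $T$ environment steps spent per sampled trajectory — then converts the requisite number of trajectories into the stated $O\!\big(\log(|\Pi||\mathcal{F}_r|/\delta)\,T^3|\mathcal{A}|^2/\epsilon^2\big)$ total interactions.

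\textbf{Step 2 (uniform convergence $\Rightarrow$ optimization error).} Condition on the event above. Because realizability gives $\pi_E \in \Pi$, and plugging $\pi = \pi_E$ into Eq.~\eqref{eq:mmdp-payoff} makes the learner and expert terms coincide (\texttt{MMDP} only compares a one-step deviation at timestep $t$, with $\pi^{t+1:T}$ held fixed in both terms), we have $\max_f V_t(\pi_E, f) = 0$, i.e. the true value of this per-timestep game is $0$. The usual chain then gives
\[
\max_f V_t(\pi^t, f) \;\le\; \max_f \widehat{V}_t(\pi^t, f) + \tfrac{\epsilon}{2} \;\le\; \max_f \widehat{V}_t(\pi_E, f) + \tfrac{\epsilon}{2} \;\le\; \max_f V_t(\pi_E, f) + \epsilon \;=\; \epsilon ,
\]
where the first and last inequalities are uniform convergence (the last applying it again to $\widehat{V}_t(\pi_E,\cdot)$) and the middle one is that $\pi^t$ (approximately) minimizes $\max_f \widehat{V}_t(\cdot, f)$ over $\Pi$. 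Since $\epsilon_t$ is by definition $\max_f V_t(\pi^t, f)$, this is exactly $\epsilon_t \le \epsilon$; an additive $\epsilon_{\mathrm{opt}}$ slack in the inner solve just adds $\epsilon_{\mathrm{opt}}$, which can be folded into $\epsilon$.

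\textbf{Main obstacle.} The delicate part is the learner's importance-weighted estimator: exploring with a uniform action rather than with $\pi$ forces the weight $|\mathcal{A}|\pi(a_t\mid s_t)$, and it is precisely this reweighting that inflates the range (hence sub-Gaussian parameter) in Hoeffding by a factor of $|\mathcal{A}|$, which is what yields the $|\mathcal{A}|^2$ dependence; carefully tracking how this range interacts with the $\tfrac1T$ normalization and the length of the reward sums is the source of the remaining horizon factors. A secondary subtlety is that the comparator step must hold against the \emph{best-response} discriminator, which is exactly why a uniform bound over all of $\mathcal{F}_r$ — not merely at a single $f$ — is required, and why realizability ($\pi_E \in \Pi$, so that plugging in the expert gives game value $0$) is invoked.
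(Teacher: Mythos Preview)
Your proposal is correct and follows essentially the same approach as the paper: Hoeffding plus a union bound over the $|\Pi||\mathcal{F}_r|$ entries of the payoff matrix, with the $|\mathcal{A}|$ importance-weight blowup and the three horizon factors accounted for exactly as you describe. Your Step~2 chaining argument (using realizability $\pi_E \in \Pi$ and the game value being $0$) is somewhat more explicit than the paper's, which simply states that uniform $\epsilon$-estimation of the payoff matrix bounds the optimization error by $\epsilon$ without spelling out the comparator chain; but this is the standard implication and not a different route.
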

\texttt{MMDP} performs $T$ iterations, giving us an overall complexity that is still polynomial in the relevant quantities. \footnote{For simplicity, we consider finite classes. One could instead use another complexity measure (e.g. Rademacher) that extends to classes with infinite elements \citep{sriperumbudur2009integral}.} We prove the following performance bound on the policies produced by \texttt{MMDP} in Appendix \ref{app:proofs}:
\begin{theorem}{\textbf{\texttt{MMDP} Upper Bound:}}
    Let $\pi$ denote the sequence of policies returned by \texttt{MMDP} and $\Bar{\epsilon} = \frac{1}{T}\sum_t^T \epsilon_t$, where $\epsilon_t$ denotes the optimization error of $\pi_t$ (Eq. \ref{eq:mmdp-payoff}). Then,
    \begin{equation}
        J(\pi_E) - J(\pi) \leq \Bar{\epsilon}T^2
    \end{equation}
    \hyperref[pf:thm:mmdp]{[Proof]}
    \label{thm:mmdp}
\end{theorem}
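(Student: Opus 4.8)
The plan is to run the standard performance-difference / PSDP-style telescoping argument, with the one twist that rolling in with the expert for a prefix of the horizon lands us \emph{exactly} in the reset distribution $\rho_E^t$ that \texttt{MMDP} optimizes against at timestep $t$.

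For each $t$ in the horizon, define the hybrid non-stationary policy $\mu_t$ that plays $\pi_E$ at timesteps $1,\dots,t$ and plays the \texttt{MMDP} outputs $\pi^{t+1},\dots,\pi^T$ afterwards, so that $\mu_0 = \pi$ (the returned sequence) and $\mu_T = \pi_E$. Telescoping,
\[
J(\pi_E) - J(\pi) = \sum_{t} \big( J(\mu_t) - J(\mu_{t-1}) \big).
\]
Now $\mu_t$ and $\mu_{t-1}$ differ only at timestep $t$ ($\pi_E$ versus $\pi^t$), both play $\pi_E$ before $t$ so the time-$t$ state is distributed as $\rho_E^t$ under either, and both play $\pi^{t+1:T}$ after $t$. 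Writing $Q_t(s,a)$ for the true-reward value of playing $a$ at $s$ at time $t$ and then following the continuation $\pi^{t+1:T}$,
\[
J(\mu_t) - J(\mu_{t-1}) = \mathbb{E}_{s \sim \rho_E^t}\big[ Q_t(s,\pi_E(s)) - Q_t(s,\pi^t(s)) \big].
\]

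It remains to bound each summand by $T\epsilon_t$. By construction the two nested expectations in Eq.~\eqref{eq:mmdp-payoff} are $\mathbb{E}_{s\sim\rho_E^t}[Q_t^{f}(s,\pi^t(s))]$ and $\mathbb{E}_{s\sim\rho_E^t}[Q_t^{f}(s,\pi_E(s))]$, where $Q_t^f$ replaces the reward $r$ by the discriminator $f$ (the continuation $\pi^{t+1:T}$ being the one fixed on the earlier, later-timestep iterations of the backward loop), so $\epsilon_t = \max_{f\in\mathcal{F}_r}\tfrac1T\,\mathbb{E}_{s\sim\rho_E^t}[Q_t^f(s,\pi^t(s)) - Q_t^f(s,\pi_E(s))]$. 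Since $r\in\mathcal{F}_r$ (and the moment-matching discriminator class can be taken symmetric, as is standard), choosing $f = -r$ gives $\mathbb{E}_{s\sim\rho_E^t}[Q_t(s,\pi_E(s)) - Q_t(s,\pi^t(s))] \le T\epsilon_t$, i.e. $J(\mu_t) - J(\mu_{t-1}) \le T\epsilon_t$. Summing over the $T$ timesteps gives $J(\pi_E) - J(\pi) \le \sum_t T\epsilon_t = T^2\bar\epsilon$. The ``approximately solve'' in Algorithm~\ref{alg:mmdp} is absorbed into the $\epsilon_t$; composing with Lemma~\ref{thm:mmdp-sc} then upgrades this to a finite-sample statement.

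The step I expect to be the main obstacle is the bookkeeping linking Eq.~\eqref{eq:mmdp-payoff} to the telescoping sum: one must check that the expert roll-in genuinely induces $\rho_E^t$ at time $t$, that \texttt{MMDP}'s tail-sum objective matches the $Q$-advantage term by term (the exploratory action $a_t\sim\mathrm{Unif}(\mathcal{A})$ in Algorithm~\ref{alg:mmdp} is only for estimating that objective and does not touch this population identity), and that $\epsilon_t$ is substituted in the correct direction. The rest — the telescoping identity, the factor $T^2$ as $T$ summands each scaled by the tail length $T$ (equivalently the $\tfrac1T$ inside Eq.~\eqref{eq:mmdp-payoff}), and the use of realizability to pick $f = \pm r$ — is routine.
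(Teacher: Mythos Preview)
Your proposal is correct and is essentially the paper's proof: the hybrid-policy telescoping you write out is exactly the Performance Difference Lemma the paper invokes, and bounding each advantage term by $T\epsilon_t$ via $r\in\mathcal{F}_r$ (with the symmetry of $\mathcal{F}_r$ handling the sign) is the same final step. If anything you are slightly more explicit than the paper about why the expert roll-in lands at $\rho_E^t$ and about the sign bookkeeping needed to match Eq.~\eqref{eq:mmdp-payoff}.
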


This bound tells us how training error $\Bar{\epsilon}$ translates to our policy's test-time performance. The lower bound matches, making the above tight.
\begin{theorem}{\textbf{\texttt{MMDP} Lower Bound:}}
    There exists an MDP, $\pi_E$ and sequence of policies $\pi$ with $\Bar{\epsilon} = \frac{1}{T}\sum_t^T \epsilon_t$, where $\epsilon_t$ denotes the optimization error of $\pi_t$ (Eq. \ref{eq:mmdp-payoff}), such that
    \begin{equation}
        J(\pi_E) - J(\pi) \geq \Omega(\Bar{\epsilon}T^2)
    \end{equation}
    \label{thm:mmdp-lb}
    \hyperref[pf:thm:mmdp-lb]{[Proof]}
\end{theorem}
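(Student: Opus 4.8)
The plan is to reuse the classical ``cliff'' construction that exhibits quadratic-in-horizon compounding of errors in imitation learning \citep{ross2011, swamy2021moments}, specialized so that the per-step quantity it generates is exactly the moment-matching optimization error $\epsilon_t$ of Eq.~\eqref{eq:mmdp-payoff}. I would take two states $\mathcal{S} = \{g, b\}$ (a ``good'' state and an absorbing ``bad'' state), two actions $\mathcal{A} = \{a_{\mathrm{stay}}, a_{\mathrm{fall}}\}$ with $\mathcal{T}(g, a_{\mathrm{stay}}) = \delta_g$, $\mathcal{T}(g, a_{\mathrm{fall}}) = \delta_b$, and $\mathcal{T}(b, \cdot) = \delta_b$; a deterministic start at $g$; a true reward equal to $1$ in $g$ and $0$ in $b$; and $\mathcal{F}_r$ the unit ball $\{f : \|f\|_\infty \le 1\}$, which contains both $r$ and $-r$. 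The expert plays $a_{\mathrm{stay}}$, so $\rho_E^t = \delta_g$ for every $t$ and $J(\pi_E) = T$. For the learner I fix a parameter $\epsilon \in (0,1)$ and let every $\pi^t$ play $a_{\mathrm{fall}}$ with probability $\epsilon$ in $g$ (and act arbitrarily in $b$); this is a convex combination of deterministic policies, so it lies in $\Pi$ and realizability is untouched. These are precisely the policies an approximate inner-game solver could return, so the construction really bounds \texttt{MMDP}'s behaviour.

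Two computations then complete the proof. First, since the learner leaves $g$ permanently the first time it plays $a_{\mathrm{fall}}$, $\Pr[s_t = g] = (1-\epsilon)^{t-1}$, so $J(\pi) = \sum_{t=1}^{T}(1-\epsilon)^{t-1}$ and hence $J(\pi_E) - J(\pi) = \sum_{k=0}^{T-1}\left(1 - (1-\epsilon)^k\right)$. Second, for the per-step error: in Eq.~\eqref{eq:mmdp-payoff} both inner expectations start from $s_t = g$ and then roll out $\pi^{t+1:T}$, and the only difference is that the learner branch jumps to $b$ with probability $\epsilon$ at step $t$ whereas the expert branch does not. The two induced rollouts therefore have identical state laws at step $t$ and state laws differing by exactly $\epsilon(1-\epsilon)^{\tau - t - 1}$ in total variation at each later step $\tau$; since all rewards lie in $[-1,1]$ this yields $\epsilon_t \le \tfrac{1}{T}\sum_{\tau = t+1}^{T}\epsilon(1-\epsilon)^{\tau - t - 1} = \tfrac{1}{T}\left(1 - (1-\epsilon)^{T-t}\right)$, and the choice $f = -r$ attains it, so $\epsilon_t = \tfrac{1}{T}\left(1 - (1-\epsilon)^{T-t}\right)$ exactly. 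Summing, $\bar\epsilon = \tfrac{1}{T}\sum_{t=1}^{T}\epsilon_t = \tfrac{1}{T^2}\sum_{k=0}^{T-1}\left(1 - (1-\epsilon)^k\right)$, so $\bar\epsilon\, T^2 = J(\pi_E) - J(\pi)$ --- exact equality, hence certainly $\Omega(\bar\epsilon T^2)$, and with the same constant as the upper bound (Theorem~\ref{thm:mmdp}). Taking $\epsilon = 1/T$ makes the statement nonvacuous: both sides are $\Theta(T)$ while $\bar\epsilon = \Theta(1/T)$.

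The one step that is not boilerplate is this two-sided evaluation of $\epsilon_t$, and the subtlety there is that the continuation inside Eq.~\eqref{eq:mmdp-payoff} uses the \emph{learner's own} future policies $\pi^{t+1:T}$, not the expert's --- so one cannot invoke a performance-difference identity against the expert and must instead track the single-step occupancy gap $\epsilon(1-\epsilon)^{\tau-t-1}$ directly, confirm that the $1/T$ prefactor times the remaining-horizon geometric sum collapses it to exactly $\tfrac1T(1-(1-\epsilon)^{T-t})$, and verify that the inner $\max$ over $\mathcal{F}_r$ neither inflates it (by the total-variation/H\"older bound, which holds for any unit-bounded reward class) nor deflates it (since $-r \in \mathcal{F}_r$). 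The remaining ingredients --- realizability of $\pi_E$, $\rho_E^t = \delta_g$, convexity and compactness of $\Pi$ and $\mathcal{F}_r$, and the two geometric-series identities --- are routine. A cosmetic caveat: the horizon index in the definition of $J$ (a sum starting at $t=0$) and in the loop of Algorithm~\ref{alg:mmdp} (running from $t=T$ down to $1$) differ by one, but fixing a single convention only perturbs both sides by an $O(\epsilon T)$ lower-order term and does not change the $\Omega(\bar\epsilon T^2)$ conclusion.
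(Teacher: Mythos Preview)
Your construction is the same cliff-walk idea the paper uses, and the overall argument is sound: the $\Omega(\bar\epsilon T^2)$ conclusion does follow from your instance. Two differences worth flagging, plus one genuine computational slip.

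\textbf{Where you diverge from the paper.} The paper concentrates all the error at a single timestep: it sets $\pi^{2}, \dots, \pi^{T}$ equal to the expert and lets only $\pi^{1}$ fall with probability $\epsilon T$, and it takes $\mathcal{F}_r$ to be the singleton $\{r\}$. This makes every $\epsilon_t$ for $t \ge 2$ trivially zero and reduces the remaining computation to one line: $\bar\epsilon = \tfrac{1}{T}\cdot \epsilon T = \epsilon$ while $J(\pi_E)-J(\pi)=\epsilon T^2$. Your version instead spreads the error uniformly (every $\pi^t$ falls with probability $\epsilon$) and takes $\mathcal{F}_r$ to be the full unit ball, which forces you to track a geometric decay at every step and to actually evaluate the inner $\max_f$. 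Both routes prove the theorem; the paper's is shorter because it never has to compute a nontrivial $\epsilon_t$.

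\textbf{The slip.} Your two-sided evaluation of $\epsilon_t$ is not quite right, and consequently the claimed exact equality $\bar\epsilon T^2 = J(\pi_E)-J(\pi)$ fails. Two issues: (i) with $\|f\|_\infty \le 1$ the H\"older/TV inequality gives $|\mathbb{E}_\mu f - \mathbb{E}_\nu f| \le 2\,\mathrm{TV}(\mu,\nu)$, not $\mathrm{TV}$, so your upper bound on $\epsilon_t$ is a factor of two too small; (ii) at step $\tau=t$ the state laws agree but the \emph{state--action} laws do not (the expert plays $a_{\mathrm{stay}}$ deterministically while the learner mixes), and since $\mathcal{F}_r$ contains action-dependent rewards this contributes to the max. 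Working it out, the maximizing $f$ sets $f(g,a_{\mathrm{stay}})=-1$, $f(g,a_{\mathrm{fall}})=+1$, $f(b,\cdot)=+1$ and gives $\epsilon_t = \tfrac{2}{T}\bigl(1-(1-\epsilon)^{\,T-t+1}\bigr)$, strictly larger than your $\tfrac{1}{T}\bigl(1-(1-\epsilon)^{T-t}\bigr)$; the choice $f=-r$ is a valid witness for the lower side but does not attain the max. The good news is that this only costs you a constant: with the correct $\epsilon_t$ one still has $J(\pi_E)-J(\pi)\ge \tfrac{1}{6}\,\bar\epsilon T^2$, so the $\Omega(\bar\epsilon T^2)$ statement survives. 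If you want the cleaner equality you advertise, either restrict $\mathcal{F}_r$ to state-only rewards in $[0,1]$, or adopt the paper's trick of putting all the error at $t=1$.
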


Intuitively, a single mistake early on in an episode can put the learner in a different part of the state space than the expert, which can make the learned policy perform poorly. 
\begin{algorithm*}[t]
\begin{algorithmic}
\STATE {\bfseries Input:} Sequence of expert visitation distributions $\rho_{E}^1 \dots \rho_{E}^T$, Policy class $\Pi$, Reward class $\mathcal{F}_r$
\STATE {\bfseries Output:} Trained policy $\pi$
\STATE Set $\pi_0 \in \Pi$, $\mathcal{D} = \{\}$
\FOR{$i=1$ to $N$}
\STATE Set $\mathcal{D}_{i-1} = \{\}$
\FOR{$j=1$ to $M$}
\STATE Sample random time $t \sim \text{Unif}([0, T])$ and start state $s_t \sim \rho_{E}^t$.
\STATE Execute a random action $a_t \sim \text{Unif}(\mathcal{A})$ in $s_t$.
\STATE Follow $\pi_{i-1}$ until the end of the horizon.
\STATE $\mathcal{D}_{i-1} \gets \mathcal{D}_{i-1} \cup \{(s_t, a_t, t, s_{t+1:T}, a_{t+1:T})\}$
\ENDFOR
\STATE Let 
\begin{equation}
    L(\pi_{i-1}, f) = \mathbb{E}_{\xi \sim \rho_E}\left[\sum_{t=0}^T f(s_{t}, a_{t})\right] - \mathbb{E}_{\xi \sim \pi_{i-1}}\left[\sum_{t=0}^T f(s_{t}, a_{t})\right]  \label{eq:traj-alg}
\end{equation}
\STATE Optimize $f_{i-1} \gets \arg\max_{f \in \mathcal{F}_r} L(\pi_{i-1}, f)$. \hfill
\algcomment{for \texttt{NRMM(NR), optimize} $L(\text{Unif}(\pi_{1:i-1}), \cdot)$ instead}
\STATE $\mathcal{D} \gets \mathcal{D} \cup \{(s_t, a_t, \hat{Q}_t = \sum_{\tau =t}^T f_{i-1}(s_{\tau}, a_{\tau}) | \text{tuple} \in \mathcal{D}_{i-1}\}$
 \STATE \algcomment{Run any no-regret algorithm on $\mathcal{D}_{1:i-1}$ to produce new $\pi_i$, e.g. FTRL:}
\STATE Optimize \begin{equation}
    \pi_i \gets \argmax_{\pi \in \Pi} \mathbb{E}_{s \sim \mathcal{D}, a \sim \pi(s)}[\mathbb{E}[\hat{Q}_t|s_t = s, a_t = a]] + H(\pi).
\end{equation}
\ENDFOR
\STATE {\bfseries Return } $\pi_i$ with lowest validation error.
\end{algorithmic}
\caption{\texttt{NRMM(BR)} (No-Regret Moment Matching: Best Response Variant) \label{alg:filter}}
\end{algorithm*}
 In short, \texttt{MMDP} is able to find a sequence of policies in polynomial time that perform at most $\Bar{\epsilon} T^2$ worse than $\pi_E$. 

\noindent \textbf{\texttt{MMDP} vs. Behavioral Cloning.} A natural question at this point might be: \textit{what benefits does \texttt{MMDP} provide over a behavioral cloning baseline?} After all, behavioral cloning also produces policies that do no worse than $O(\epsilon T^2)$ compared to the expert and requires no environment interaction.

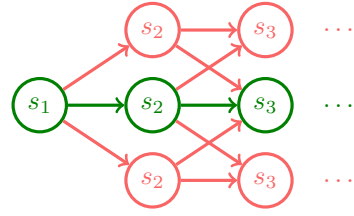
\begin{figure}[t]
    \centering
    \begin{tikzpicture}[scale=1, transform shape]
    \node (b) [draw, very thick, circle, color=expert] at (0, 1) {$s_1$};
    \node (d) [draw, very thick, circle, color=error] at (1.5, 2) {$s_2$};
    \node (e) [draw, very thick, circle, color=expert] at (1.5, 1) {$s_2$};
    \node (f) [draw, very thick, circle, color=error] at (1.5, 0) {$s_2$};
    \node (g) [draw, very thick, circle, color=error] at (3, 2) {$s_3$};
    \node (h) [draw, very thick, circle, color=expert] at (3, 1) {$s_3$};
    \node (i) [draw, very thick, circle, color=error] at (3, 0) {$s_3$};
    \node (j) [color=error] at (4, 2) {$\dots$};
    \node (k) [color=expert] at (4, 1) {$\dots$};
    \node (l) [color=error] at (4, 0) {$\dots$};

    \draw [->, very thick, color=error] (b) to (d);
    \draw [->, very thick, color=expert] (b) to (e);
    \draw [->, very thick, color=error] (b) to (f);

    \draw [->, very thick, color=error] (d) to (g);
    \draw [->, very thick, color=expert] (e) to (h);
    \draw [->, very thick, color=error] (f) to (i);
    
    \draw [->, very thick, color=error] (e) to (g);
    \draw [->, very thick, color=error] (e) to (i);

    \draw [->, very thick, color=error] (d) to (h);
    \draw [->, very thick, color=error] (f) to (h);

    \end{tikzpicture}
    \caption{\textsc{dante}: A three-row MDP where at each timestep, the learner can move up, move down, or stay in the same row. The expert always stays in the center row. The goal is to stay in the top two rows.\label{fig:dante}}
\end{figure}
Consider a simplified variant of \texttt{MMDP} in which one doesn't perform rollouts and instead solves a game with a single-timestep payoff at each iteration. This entirely decouples the iterations as we no longer account for the actions of the future policies we have already computed. In effect, this is what purely offline behavioral cloning is doing. 

The core issue with such an approach is that \textit{it prevents the learner from distinguishing between mistakes that compound over the horizon and those that don't}. Consider, for example, the MDP depicted in Figure~\ref{fig:dante} where the goal is to stay in the top two rows. Assume policies $\pi_{3:T}$ go straight but $\pi_2$ goes down w.p. $\epsilon T$. %
Now, let's think about what would happen if we used BC or \texttt{MMDP} to pick $\pi_1$. Behavioral cloning would pick a policy that always goes straight, as doing so perfectly matches expert actions. This would lead to a performance gap of $$J(\pi_E, r) - J(\{\pi_{BC}, \pi_{2:T}\}, r) = \epsilon T(T-1).$$ However, if we instead used \texttt{MMDP} to pick $\pi_1$, the rollouts with $\pi_{2:T}$ would reveal to the learner that it is better to go up on the first timestep so they still receive reward over the horizon, no matter what $\pi_2$ chooses. Thus, the learner would match expert performance, i.e. $$J(\pi_E, r) - J(\{\pi_{\texttt{MMDP}}, \pi_{2:T}\}, r) = 0.$$

So, while in the worst case, BC and \texttt{MMDP} might both perform poorly (e.g. if the learner falls off a cliff and is stuck for the rest of the episode), we would expect that for a wide set of practical problems, knowledge of future choices would enable better performance over the horizon.

\subsection{Method 2: No-Regret Moment Matching}

For tasks with long horizons, learning a sequence of policies may be significantly more burdensome than learning just one. We now present an algorithm that outputs a single, stationary policy. Our approach is based on the No-Regret Policy Iteration (NRPI) algorithm of \citet{ross2011}. Instead of solving a sequence of optimization problems backwards in time like PSDP, NRPI picks a time to sample from the roll-in distribution uniformly at random, takes a random action, and then follows the previous policy $\pi_{i-1}$ for the rest of the episode. This gives it sample estimates of $Q^{\pi_{i-1}}$ on states from the roll-in distribution. To have a no-regret property, NRPI performs (regularized) greedy policy improvement using the \textit{history} of such samples, i.e.
\begin{equation}
    \pi_{i} = \argmax_{\pi \in \Pi} \sum_{j=0}^{i-1} \mathbb{E}_{\substack{t \sim U[0, T], s \sim \eta^t}}[Q^{\pi_j}(s, \pi(s))] + H(\pi). \nonumber
\end{equation}
Notice that rather than solving a global exploration problem, NRPI only focuses on picking the best action on states from the roll-in distribution, avoiding the exponential interaction complexity lower bound. NRPI can be seen as an analog of PSDP for stationary policies \citep{ross2011}.

As with PSDP, NRPI requires a reward function. We therefore choose one adversarially for IRL. We outline the full procedure in Algorithm \ref{alg:filter}. Intuitively, this algorithm is performing \textit{primal} moment-matching with the learner's start state distribution being the expert's stationary distribution (i.e. Algorithm \ref{alg:primal-irl} or GAIL with expert resets). For space reasons, we postpone the dual algorithm to Appendix \ref{app:dual}.

Let $L(\pi, \mathcal{D}_i) = \mathbb{E}_{s \sim \rho_E, a \sim \pi(s)}[\mathbb{E}_{\mathcal{D}_i}[\hat{Q}_t \vert s_t = s, a_t = a]]$ denote the cost-sensitive classification loss of policy $\pi$ over dataset $\mathcal{D}_i$.
We use the following regret measure in our analysis:
\begin{equation}
    \epsilon_i =  L(\pi^*, \mathcal{D}_i) - L(\pi_i, \mathcal{D}_i), \label{eq:nrmm-eps}
\end{equation}
where $\pi^* = \argmax_{\pi \in \Pi} \sum_i^N L(\pi, \mathcal{D}_i)$.
Like \texttt{MMDP}, \texttt{NRMM} has polynomial time iterations.
\begin{lemma}{\textbf{\texttt{NRMM} Sample Complexity:}} Let $\epsilon > 0$. At iteration $i$, \texttt{NRMM} requires at most $$O\left(\log\left(\frac{|\Pi||\mathcal{F}_r|}{\delta}\right) \frac{T^3|\mathcal{A}|^2}{\epsilon^2}\right)$$ interactions with the environment to, w.p. $\geq 1 - \delta$, produce a policy $\pi_i$ with instantaneous regret $\epsilon_i \leq \epsilon$ (Eq. \ref{eq:nrmm-eps}). \hyperref[pf:thm:nrmm-sc]{[Proof]}
\label{thm:nrmm-sc}
\end{lemma}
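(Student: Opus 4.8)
The plan is to reuse the machinery behind Lemma~\ref{thm:mmdp-sc}, replacing PSDP's backward sweep with NRPI's single time-randomized rollout. The central observation is that the policy update at iteration $i$ is an empirical cost-sensitive classification problem: we seek $\pi_i\in\Pi$ minimizing an empirical estimate of $L(\pi,\mathcal{D}_i)=\mathbb{E}_{s\sim\rho_E,\,a\sim\pi(s)}[\mathbb{E}_{\mathcal{D}_i}[\hat Q_t\mid s_t=s,a_t=a]]$. Since each of the $M$ rollouts samples $t\sim\mathrm{Unif}([0,T])$ and $s_t\sim\rho_E^t$, then executes $a_t\sim\mathrm{Unif}(\mathcal{A})$ before following $\pi_{i-1}$, the triple $(s_t,a_t,\hat Q_t)$ reweighted by $|\mathcal{A}|\,\pi(a_t\mid s_t)$ is an \emph{unbiased} single-sample estimator of $L(\pi,\mathcal{D}_i)$ for each fixed $\pi$; and because $f_{i-1}\in[-1,1]$, the label $\hat Q_t=\sum_{\tau=t}^T f_{i-1}(s_\tau,a_\tau)$ lies in $[-T,T]$, so this estimator has range $O(|\mathcal{A}|T)$.

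First I would condition on the adversary's choice and establish \emph{uniform convergence}: Hoeffding over the $M$ i.i.d.\ samples, followed by a union bound over $\pi\in\Pi$ and over $f\in\mathcal{F}_r$ (the latter is needed because $f_{i-1}$ is chosen from data and enters the labels $\hat Q_t$), gives that with probability $\ge 1-\delta$ the empirical loss is within $O\!\big(|\mathcal{A}|T\sqrt{\log(|\Pi||\mathcal{F}_r|/\delta)/M}\big)$ of $L(\cdot,\mathcal{D}_i)$ simultaneously for every policy and every candidate reward. The identical concentration argument, applied to the estimator of $L(\pi_{i-1},f)$ formed from the $\pi_{i-1}$-rollouts and the expert demonstrations, shows that $f_{i-1}$ is an approximate \emph{population} best response with the same slack.

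Next I would run the standard ERM-to-regret conversion: since $\pi_i$ is the empirical minimizer over the accumulated dataset, chaining two uniform-convergence inequalities (empirical optimality of $\pi_i$ against $\pi^*$ from Eq.~\eqref{eq:nrmm-eps}, then back to population losses) shows $\epsilon_i=L(\pi_i,\mathcal{D}_i)-L(\pi^*,\mathcal{D}_i)$ is at most a constant times the uniform-convergence radius. Forcing that radius below $\epsilon$ requires $M=O\!\big(|\mathcal{A}|^2T^2\log(|\Pi||\mathcal{F}_r|/\delta)/\epsilon^2\big)$, and since each rollout consumes at most $T$ environment interactions, the per-iteration interaction count is $O\!\big(|\mathcal{A}|^2T^3\log(|\Pi||\mathcal{F}_r|/\delta)/\epsilon^2\big)$, as claimed.

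The step I expect to be the main obstacle is correctly accounting for the three nested randomizations so as to land exactly on $T^3|\mathcal{A}|^2$: the random action contributes the $|\mathcal{A}|^2$ through the importance-weight range; the reward-to-go contributes $T^2$ through the $[-T,T]$ label range; the random rollout time must be shown \emph{not} to cost an extra $T$ — which works because $L(\cdot,\mathcal{D}_i)$ is itself an average over $t\sim\mathrm{Unif}([0,T])$ against $\rho_E$, so a single pool of time-randomized samples estimates it without needing per-timestep accuracy — while the final factor of $T$ is purely rollout length. A secondary subtlety is that $\hat Q_t$ depends on the data-dependent $f_{i-1}$, so the downstream classification analysis must be carried out conditionally on $\mathcal{D}_i$ (or with the $\mathcal{F}_r$-union bound already in hand), and one must use that the roll-in is the \emph{true} $\rho_E^t$ — guaranteed by the algorithm's input — so that all statistical error lives in the loss estimates rather than in the sampling distribution.
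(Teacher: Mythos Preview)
Your proposal is correct and follows essentially the same route as the paper: Hoeffding with range $R=T|\mathcal{A}|$ (from the $[-T,T]$ reward-to-go and the $|\mathcal{A}|$ importance weight), a union bound over the $|\Pi||\mathcal{F}_r|$ payoff entries, a factor of $T$ for rollout length, and the observation that uniform estimation error bounds optimization error which in turn bounds instantaneous regret. Your treatment is in fact more careful than the paper's---you explicitly justify why the union over $\mathcal{F}_r$ is needed (data-dependence of $f_{i-1}$) and why the time randomization costs no extra $T$---whereas the paper simply invokes the payoff-matrix framing of Lemma~\ref{lem:sc} and the \texttt{MMDP} argument verbatim.
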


However, unlike \texttt{MMDP} which always has $T$ outer-loop iterations, \texttt{NRMM} must be run until the average training error drops below some threshold on $\Bar{\epsilon}$. While the particular number of iterations $N$ is a problem-specific quantity, the fact that the policy is selected by a no-regret algorithm tells us that, by definition, \begin{equation}
    \lim_{N \to \infty} \frac{1}{N}\sum_{i=1}^N \epsilon_i = 0.
\end{equation}
Thus, regardless of the desired $\Bar{\epsilon}$, the outer loop will eventually terminate, with $\Bar{\epsilon} \propto \frac{1}{\sqrt{N}}$ or $\Bar{\epsilon} \propto \frac{\log (N)}{N}$ for a wide set of problems \citep{HazanTextbook}, giving us poly-time bounds.
There exist two variations of \texttt{NRMM}: one in which the adversary plays a best-response (i.e. differentiating between the current policy and expert demos -- labeled as \texttt{NRMM(BR)}) and another in which the adversary follows a no-regret strategy (i.e. differentiating between replay buffer $\mathcal{D}$ and expert demos -- labeled as \texttt{NRMM(NR)}). Both share similar policy performance guarantees (Appendix \ref{app:proofs}).

\begin{theorem}{\textbf{\texttt{NRMM(BR)} Upper Bound:}} 
    Let $\pi_1, \dots, \pi_N$ denote the sequence policies computed by \texttt{NRMM(BR)} and $\Bar{\epsilon} = \frac{1}{N} \sum_{i=1}^N \epsilon_i$ their average regret (Eq. \ref{eq:nrmm-eps}). Then, $\exists \pi \in \{\pi_1, \dots, \pi_N\}$ s.t.
    \begin{equation}
        J(\pi_E) - J(\pi) \leq \Bar{\epsilon}T^2
    \end{equation}
    \hyperref[pf:thm:filter-br]{[Proof]}
    \label{thm:filter-br}
\end{theorem}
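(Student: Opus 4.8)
The plan is to run the standard online-to-batch reduction, but routed through the performance difference lemma so that the expert reset distribution $\rho_E^t$ enters exactly where it is needed. There are three ingredients. (i) \emph{Realizability plus best response}: because the discriminator plays a best response, $J(\pi_{i-1}, f_{i-1}) - J(\pi_E, f_{i-1}) = \max_{f \in \mathcal{F}_r}\big(J(\pi_{i-1}, f) - J(\pi_E, f)\big)$, and since $r \in \mathcal{F}_r$ this \emph{exploitability} of $\pi_{i-1}$ upper bounds the true sub-optimality $J(\pi_E) - J(\pi_{i-1})$ (this is the only place the unknown reward is handled). (ii) \emph{Performance difference lemma around $\pi_E$}: this exploitability equals $\sum_{t=0}^T \mathbb{E}_{s \sim \rho_E^t}\big[\mathbb{E}_{a \sim \pi_{i-1}} Q^{\pi_{i-1}}_{f_{i-1},t}(s,a) - \mathbb{E}_{a \sim \pi_E} Q^{\pi_{i-1}}_{f_{i-1},t}(s,a)\big]$, which, recognizing that $\hat Q_t$ is an unbiased estimate of $Q^{\pi_{i-1}}_{f_{i-1},t}$ and that $\mathcal{D}_i$ draws $t$ uniformly with $s_t \sim \rho_E^t$, is exactly $(T{+}1)$ times the cost-sensitive classification gap $L(\pi_{i-1}, \mathcal{D}_i) - L(\pi_E, \mathcal{D}_i)$ --- this identification is what makes resetting to the expert's state distribution essential. (iii) \emph{No regret}: by the definition of $\epsilon_i$ in Eq.~\eqref{eq:nrmm-eps} and of $\pi^*$, and since $\pi_E \in \Pi$ implies $\sum_i L(\pi^*, \mathcal{D}_i) \le \sum_i L(\pi_E, \mathcal{D}_i)$, we get $\tfrac1N \sum_{i=1}^N \big(L(\pi_i, \mathcal{D}_i) - L(\pi_E, \mathcal{D}_i)\big) \le \tfrac1N\sum_i \epsilon_i = \Bar{\epsilon}$.

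Concretely, I would proceed as follows. First, fix iteration $i$ and write down the performance difference lemma with roll-in policy $\pi_E$ and baseline $\pi_{i-1}$ under cost $f_{i-1}$, producing the expansion in (ii); combine it with (i) to obtain $J(\pi_E) - J(\pi_{i-1}) \le (T{+}1)\big(L(\pi_{i-1}, \mathcal{D}_i) - L(\pi_E, \mathcal{D}_i)\big)$. Second, average over $i = 1, \dots, N$ and plug in (iii) to get $\tfrac1N\sum_i \big(J(\pi_E) - J(\pi_{i-1})\big) \lesssim (T{+}1)\Bar{\epsilon}$; the second factor of $T$ that appears in the final bound comes from the magnitude of the cost-to-go estimates $\hat Q_t$, each of which aggregates up to $T$ per-step rewards, so that carrying the precise constants through (ii)--(iii) yields the stated $\Bar{\epsilon} T^2$ (which is unimprovable for the same compounding-error reason as the \texttt{MMDP} lower bound, Theorem~\ref{thm:mmdp-lb}). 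Third, since the minimum of a finite set is at most its average, there exists $\pi \in \{\pi_1,\dots,\pi_N\}$ with $J(\pi_E) - J(\pi) \le \Bar{\epsilon} T^2$, which is the returned (lowest-validation-error) policy up to the extra validation slack.

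I expect the main obstacle to be the mismatch between what the inner loop controls and what the performance difference lemma needs: the classification loss $L(\cdot, \mathcal{D}_i)$ is built from $Q^{\pi_{i-1}}_{f_{i-1}}$ --- an adversarially chosen surrogate reward \emph{and} the previous iterate's value function --- whereas we want to bound performance under the true reward and, ideally, $Q^{\pi_i}$. The best-response step in (i) disposes of the reward mismatch cleanly, but the staleness of the roll-out policy ($\pi_{i-1}$ versus the optimized $\pi_i$) must still be absorbed; I would handle it either by telescoping ($\sum_i \big(J(\pi_{i-1}) - J(\pi_i)\big) = J(\pi_0) - J(\pi_N) = O(T)$, i.e.\ an $O(T/N)$ lower-order per-iteration term, which also lets us pass from $\{\pi_0,\dots,\pi_{N-1}\}$ to $\{\pi_1,\dots,\pi_N\}$) or, more cleanly, by invoking the stability of the chosen no-regret algorithm so that consecutive iterates, and hence their $Q$-functions, are close. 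The remaining issues --- that $\hat Q_t$, the per-iteration game solves, and the $L(\cdot,\mathcal{D}_i)$ are estimated from finitely many rollouts --- are routine concentration arguments already subsumed by Lemma~\ref{thm:nrmm-sc} and only affect the sample complexity, not the dependence on $\Bar{\epsilon}$ here.
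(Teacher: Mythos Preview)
Your proposal is correct and follows essentially the same route as the paper: apply the performance difference lemma with $\pi_E$ as the roll-in policy so that the advantage gaps live on $\rho_E^t$, upper-bound the true reward $r$ by the best-response discriminator (using $r \in \mathcal{F}_r$), identify the result as $T$ times the cost-sensitive classification gap $L_i(\pi_i) - L_i(\pi_E)$, and conclude via the no-regret property together with $\pi_E \in \Pi$ (so that $\pi_E$ is dominated by $\pi^*$) and the observation that some iterate is at least as good as the mixture. The paper does not actually address the $\pi_{i-1}$-versus-$\pi_i$ staleness you flag; it simply defines the per-round loss $L_i$ in the proof with roll-out policy $\pi_i$ and discriminator $f_i$ chosen as the best response to $\pi_i$, thereby absorbing any index shift into the regret measure rather than telescoping it away---so your treatment here is, if anything, more careful than the original.
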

When we use a no-regret algorithm to pick $f_i$ rather than a best response, we need to consider the instantaneous regrets of said algorithm. Let $f^* = \argmax_{f \in \mathcal{F}_r} \sum_{i=1}^N L(\pi_i, f)$ and \begin{equation}
    \delta_i = L(\pi_i, f^*) - L(\pi_i, f_i), \label{eq:nrmm-delta}
\end{equation}
where $L$ is as defined in Eq. \eqref{eq:traj-alg}. We can now give a performance guarantee as a function of $\epsilon_i$ and $\delta_i$.
\begin{theorem}{\textbf{\texttt{NRMM(NR)} Upper Bound:}} 
    Let $\pi_1, \dots, \pi_N$ and $f_1, \dots, f_N$ denote the sequence policies and rewards computed by \texttt{NRMM(NR)} and $\Bar{\epsilon} = \frac{1}{N} \sum_{i=1}^N \epsilon_i$, $\Bar{\delta} = \frac{1}{N} \sum_{i=1}^N \delta_i$ their respective average regrets (Eqs. \ref{eq:nrmm-eps}, \ref{eq:nrmm-delta}). Then, the uniform mixture over policies $\Bar{\pi}$ satisfies
    \begin{equation}
        J(\pi_E) - J(\Bar{\pi}) \leq (\Bar{\epsilon} + \Bar{\delta})T^2
    \end{equation}
    \label{thm:filter-nr}
    \hyperref[pf:thm:filter-nr]{[Proof]}
\end{theorem}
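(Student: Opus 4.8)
The plan is to run the usual ``both players are no-regret $\Rightarrow$ the averaged play is an approximate Nash equilibrium'' argument for the zero-sum moment-matching game, with the one extra wrinkle that the policy player's updates are the NRPI-style cost-sensitive classification steps rather than exact best responses, so we must convert the policy player's \emph{classification} regret into regret in the actual game. Concretely: (i) reduce $J(\pi_E) - J(\bar\pi)$ to the game value via realizability; (ii) use the reward player's no-regret guarantee to replace the worst-case $f$ by the played rewards $f_i$, at an additive cost $\bar\delta$; (iii) use the policy player's no-regret guarantee to compare the played policies against $\pi_E$ on the round-$i$ classification problems, at an additive cost $\bar\epsilon$; (iv) convert the round-$i$ classification gap into a return gap via a performance-difference-lemma telescoping, which is where the $T^2$ comes from.

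In detail. For (i), since $r \in \mathcal F_r$ and $J(\bar\pi, f) = \frac1N\sum_{i=1}^N J(\pi_i, f)$ (the mixture $\bar\pi$ draws $i \sim \mathrm{Unif}([N])$ and runs $\pi_i$, so $J$ is linear over the mixture), it suffices to bound $\max_{f\in\mathcal F_r}\frac1N\sum_{i=1}^N\!\big(J(\pi_E,f) - J(\pi_i,f)\big)$. For (ii), the rewards $f_1,\dots,f_N$ are produced by a no-regret algorithm on the losses $L(\pi_i,\cdot)$ of Eq.~\eqref{eq:traj-alg}, so for the maximizing $f^\ast$ we get $\frac1N\sum_i L(\pi_i, f^\ast) \le \frac1N\sum_i L(\pi_i, f_i) + \bar\delta$ by definition of $\delta_i$ (Eq.~\eqref{eq:nrmm-delta}); combined with linearity in the policy argument this lets us pass from the worst-case reward to the rewards actually used inside each classification subproblem. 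For (iii), the policies $\pi_1,\dots,\pi_N$ are produced by the NRPI-style no-regret classifier run on the accumulated dataset, and since $\pi_E \in \Pi$ is an admissible comparator we get $\frac1N\sum_i\big(L(\pi_i,\mathcal D_i) - L(\pi_E,\mathcal D_i)\big) \le \bar\epsilon$ directly from the definition of $\epsilon_i$ (Eq.~\eqref{eq:nrmm-eps}) --- i.e.\ on average the learner is no worse than the expert at each round's cost-sensitive classification problem, whose states come from the expert roll-in $\rho_E$ and whose cost-to-go labels $\hat Q_t$ are computed under $\pi_{i-1}$ and $f_{i-1}$.

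Step (iv) is the technical core and the main obstacle. For each round $i$ I would write the performance-difference identity
\[
J(\pi_E, f_i) - J(\pi_i, f_i) = \sum_{t=0}^{T}\mathbb E_{s\sim\rho_E^t}\!\Big[Q^{\pi_i}_{f_i,t}\big(s,\pi_E(s)\big) - Q^{\pi_i}_{f_i,t}\big(s,\pi_i(s)\big)\Big],
\]
observe that its right-hand side is, up to the $\mathrm{Unif}([0,T])$ normalization of the sampled timestep, exactly the population version of the round-$i$ classification gap $L(\pi_i,\mathcal D_i) - L(\pi_E,\mathcal D_i)$ controlled in step (iii), and bound it by that gap times the $O(T)$ width of the range of the cost-to-go estimates $\hat Q_t$: one factor of $T$ from summing over the $T{+}1$ timesteps and one from the magnitude of $\hat Q$, yielding the $T^2$ (which, by the lower-bound results, cannot be improved). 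Two points require care here. First, the labels $\hat Q_t$ are generated by rolling in the \emph{previous} policy $\pi_{i-1}$ with the \emph{previous} reward $f_{i-1}$, not $\pi_i$ and $f_i$; following the NRPI analysis of \citet{ross2011} I would argue that this lag is washed out by the same averaging over rounds that the no-regret guarantees already exploit (the reward lag specifically being absorbed into the $\bar\delta$ term). Second, the classification is evaluated on the expert-reset distribution $\rho_E^t$ rather than on $\pi_i$'s own induced state distribution $d^t_{\pi_i}$; it is exactly this substitution that makes each subproblem a tractable supervised problem (Lemma~\ref{thm:nrmm-sc}) while simultaneously introducing the extra factor of $T$ of compounding error. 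Assembling (i)--(iv) and using that the game value $\min_\pi\max_f\big(J(\pi_E,f) - J(\pi,f)\big) \le 0$ (attained by $\pi_E \in \Pi$), the two no-regret inequalities chain through the performance-difference bound to give $\max_{f\in\mathcal F_r}\big(J(\pi_E,f) - J(\bar\pi,f)\big) \le (\bar\epsilon+\bar\delta)T^2$, and in particular for $f = r$, which is the claim. Relative to Theorem~\ref{thm:filter-br}, the only genuinely new work is carrying the reward-player regret $\bar\delta$ cleanly through the performance-difference step.
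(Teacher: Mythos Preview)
Your proposal is correct and takes essentially the same approach as the paper: apply the performance-difference lemma to write $J(\pi_E)-J(\pi_i)$ as a sum over timesteps of $Q$-advantages evaluated on the expert's state distribution, upper-bound by a supremum over $f\in\mathcal F_r$, and then chain the policy-player and reward-player regret inequalities (the paper compresses this into the single identity $\bar\epsilon+\bar\delta=\frac{1}{NT}\big(\max_f\sum_iL_i(\pi_i,f)-\min_\pi\sum_iL_i(\pi,f_i)\big)$) to obtain the $(\bar\epsilon+\bar\delta)T^2$ bound. The only cosmetic difference is that you explicitly flag and argue away the $\pi_{i-1}/f_{i-1}$ lag via the no-regret averaging, whereas the paper silently redefines the per-round loss $L_i$ in the proof to use $\pi_i$ as the roll-out policy and states the regrets directly with respect to that redefined quantity.
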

These bounds are tight, via a similar construction to before.
\begin{theorem}{\textbf{\texttt{NRMM} Lower Bound:}} 
    There exists an MDP, $\pi_E$ and $\pi$ with average training error $\Bar{\epsilon}$ such that
    \begin{equation}
        J(\pi_E) - J(\pi) \geq \Omega(\Bar{\epsilon}T^2)
    \end{equation}
    \label{thm:filter-lb}
    \hyperref[pf:thm:filter-lb]{[Proof]}
\end{theorem}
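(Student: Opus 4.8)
The plan is to recycle the lower-bound construction behind Theorem~\ref{thm:mmdp-lb} --- morally, the cliff MDP underlying the \textsc{dante} example of Figure~\ref{fig:dante} --- and to show that a \emph{stationary} policy with small per-step cost-sensitive regret can still drift off the expert's support at a rate that forces a quadratic gap. Concretely, I would take a ``safe path'' of states $1,\dots,T$ plus an absorbing sink region of reward $0$; in every path state one designated (``expert'') action advances to the next path state while every other action transitions to the sink, from which there is no return, and the true reward is $1$ on the path. I set $\rho_E^t$ to be the point mass on path state $t$ and let $\pi_E$ play the advancing action everywhere, so that $J(\pi_E)$ equals the full horizon and $\rho_E = \frac{1}{T}\sum_t \rho_E^t$ is uniform over the path. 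Because the path states are interchangeable, a single stationary policy is the natural object here and, crucially, the adversary's best discriminating reward in $\mathcal{F}_r$ (the learner visits the sink, the expert does not) is realizable.

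Next I would posit the candidate policy $\pi$ that plays the advancing action with probability $1-\beta$ and a wrong action with probability $\beta$ at every path state, together with a run of \texttt{NRMM} (a sequence of $\pi_{i-1}$'s that are themselves $\beta$-perturbations of $\pi_E$, and the corresponding best-response rewards $f_{i-1}$) whose average regret $\bar{\epsilon}$ (Eq.~\ref{eq:nrmm-eps}) I compute in terms of $\beta$. The true value is immediate: $J(\pi_E) - J(\pi) = \sum_{t=1}^T \bigl(1-(1-\beta)^{t-1}\bigr) = \Theta(\beta T^2)$ whenever $\beta = O(1/T)$, since the probability of having left the path by time $t$ is $\Theta(\beta t)$ and each such step loses one unit of reward. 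It then remains to check that the per-iteration cost-sensitive regret of $\pi$ against the in-hindsight-optimal classifier $\pi^{\star}$ --- which in this construction is exactly the stationary always-advance policy --- is $\Theta(\beta)$ on the appropriate horizon-normalized scale, so that $\bar{\epsilon} = \Theta(\beta)$ and hence $J(\pi_E) - J(\pi) = \Omega(\bar{\epsilon} T^2)$. Running the reward player as an exact best response makes $\bar{\delta} = 0$ in Eq.~\ref{eq:nrmm-delta}, so the same example simultaneously witnesses tightness of both Theorem~\ref{thm:filter-br} and Theorem~\ref{thm:filter-nr}; choosing $\beta$ so that $\bar{\epsilon}$ hits any prescribed value finishes the statement.

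The main obstacle is exactly this last accounting step: one must verify that the \emph{local} objective \texttt{NRMM} optimizes --- a one-step cost-sensitive classification loss evaluated only on states drawn from $\rho_E$ and scored by roll-outs of a near-expert policy under a bounded adversarial reward --- does not already ``see'' the quadratic compounding, i.e., that a per-step action error of $\beta$ inflates the training regret $\epsilon_i$ by only $\Theta(\beta)$ rather than $\Theta(\beta T)$. This is precisely the phenomenon the theorem captures (the algorithm resets to expert states and so cannot correct for the learner's own drift), but it forces the construction, and the normalization of $\epsilon_i$, to be arranged so that the roll-in policies stay near-expert and the path-continuation value per normalized step is a constant. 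A secondary, milder subtlety relative to the \texttt{MMDP} lower bound is the stationarity constraint: I resolve it by making the path states homogeneous, so that both $\pi$ and $\pi^{\star}$ are stationary, and by confirming that the argmin in the definition of $\epsilon_i$ (Eq.~\ref{eq:nrmm-eps}) is the stationary always-advance policy rather than some time-varying competitor.
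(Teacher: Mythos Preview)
Your construction is correct and proves the bound, but it takes a different route from the paper's. The paper also uses the \textsc{Cliff} MDP but concentrates \emph{all} of the error at the single start state $s_0$: the witness policy plays the wrong action with probability $\epsilon T$ at $s_0$ and is \emph{perfect} elsewhere. Because $s_0$ carries only $\tfrac{1}{T}$ mass in $\rho_E$ yet is always the test-time start, the training regret is automatically diluted by a factor of $T$ relative to the performance gap, giving $\bar\epsilon=\epsilon$ and $J(\pi_E)-J(\pi)=\epsilon T^2$ with almost no calculation; in particular, since $\pi$ is perfect after the first step, the roll-out policy $\pi_{i-1}=\pi$ yields exact $Q$-values and the ``main obstacle'' you flag simply never arises. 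Your uniform-$\beta$ version instead obtains the gap from compounding drift --- every reset sees only one first-step mistake of cost $\Theta(T-t)$, while at test time the policy accrues $\Theta(\beta t)$ probability of having left the path by step $t$ --- which also gives $\bar\epsilon=\Theta(\beta)$ and a $\Theta(\beta T^2)$ gap, but requires the extra bookkeeping you anticipate (ensuring $\beta=O(1/T)$ and that the $\beta$-error in the roll-out policy does not perturb the $Q$-estimates at leading order). The paper's surgical placement of error cleanly isolates the reset/start-distribution mismatch as the sole source of the $T$ blow-up; your construction complements it by showing no distinguished state is needed and that homogeneous small errors already suffice.
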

As \texttt{NRMM} also performs rollouts in the environment, our discussion on why \texttt{MMDP} is preferable to behavioral cloning also applies to \texttt{NRMM}. 

We now highlight a nuance related to \texttt{NRMM}.

\noindent\textbf{Discriminator Training.} We prove that the standard trajectory-level discriminator training usually performed in IRL (i.e. Eq. \ref{eq:traj-alg} in Algorithm \ref{alg:filter}) is lower variance than the suffix-level discriminator training one might think to perform based on the samples in replay buffer $\mathcal{D}$. We prove this point more formally in Appendix \ref{app:proofs}. In practice, we find that trajectory-level discriminator training works better than suffix-level discriminator training and therefore utilize it in all of our implementations.

\subsection{Dual Algorithms}
A natural question upon reading the preceding sections is whether \textit{dual} algorithms can leverage expert resets to speed up policy search. Practically, these algorithms would run PSDP or NRPI in their \textit{inner loop} with a reward function chosen via a no-regret algorithm in their \textit{outer loop}. Indeed these dual algorithms also work, but for a subtle reason. \footnote{In an earlier draft of this paper, we arrived at the incorrect answer of \textit{no} to the preceding question. We contain multitudes.} 

Recall that NRPI and PSDP only compete with policies that have similar visitation distributions to the expert \citep{bagnell2003policy, ross2011}. This is fine when selecting policies in the outer loop as the expert policy is an equilibrium strategy. However, the story is less clear when policy search is the inner loop. This is because the expert policy might be quite far from the optimal policy for the adversarially chosen reward. Thus, if we use NRPI/PSDP as our policy search method, the learner may struggle to find the \textit{best response} needed for equilibrium computation. However, we prove that we're still able to guarantee we learn strong policies on average over iterations via both dual algorithms. Intuitively, one can guarantee doing as well as $\pi_E$ under all reward functions simply by doing as well as $\pi_E$ at each iteration of a no-regret reward selection algorithm. Note that this does not require finding the
truly optimal policy for each adversarially selected reward function. Put differently, an \textit{expert-competitive response} suffices if a \textit{best response} is not possible.
We give the full psuedocode for the dual algorithms and prove similar performance bounds to those in the preceding section in Appendix \ref{app:dual}.

\section{Getting the Best of Both Worlds}

In the preceding section, we derived two algorithms, \texttt{MMDP} and \texttt{NRMM}, which can compute policies that match expert behavior in polynomial time. However, in the worst case, both can produce policies that suffer from a quadratic compounding of errors with respect to the horizon. Traditional IRL approaches have complimentary strengths: they can suffer from exponential computation complexity but produce policies with a performance gap linear in the horizon. This begs the question: \textit{can we get the best of both worlds?}

Consider a variation of \texttt{NRMM} where, with probability $\alpha$, we perform an expert reset, otherwise performing a standard rollout (i.e. $s_t \sim \rho_{\pi_{i-1}}^t$). By setting $\alpha = 1$, we unsurprisingly recover \texttt{NRMM}. However, if we set $\alpha = 0$, the per-round loss that is passed to the learner becomes
\begin{equation}
    L(\pi, \mathcal{D}_i) = \mathbb{E}_{s \sim \rho_i, a \sim \pi(s)}[\mathbb{E}_{\mathcal{D}_i}[\hat{Q}_t \vert s_t = s, a_t = a]],
\end{equation}
This is strikingly similar to the standard approximate policy improvement procedure \citep{sutton2018reinforcement} with an adversarially chosen reward. Recall that in \texttt{NRMM}, we select our discriminator $f$ as in primal IRL (Algorithm \ref{alg:primal-irl}). Put together, setting $\alpha = 0$ is effectively using an off-policy RL algorithm in the policy optimization component of Algorithm \ref{alg:primal-irl}. One might therefore reasonably expect such an approach to inherit the exponential complexity and linear-in-the-horizon performance gap of standard IRL.

It is natural to consider annealing between these extremes by decaying $\alpha$ from $1$ to $0$ over outer-loop iterations. Intuitively, this allows the learner to quickly find a policy with quadratic errors before refining it to a policy with error linear in the horizon. Even more simply, one can interpolate with a fixed $\alpha = 0.5$ probability, reducing the exploration burden on the learner while mitigating compounding errors. We term such annealed / interpolated approaches \texttt{FILTER}: Fast Inverted Loop Training via Expert Resets. Defining $\Bar{\epsilon}$ as in Eq. \eqref{eq:nrmm-eps} and $\Bar{\epsilon}_{RL}$ as
\begin{equation}
    \Bar{\epsilon}_{\text{RL}} =  \frac{1}{NT} \sum_i^N (\max_{f_i \in \mathcal{F}_r} J(\pi_E, f_i) - J(\pi_i, f_i)), \label{eq:rl-err}
\end{equation}
(i.e. the errors on the expert and start state distributions) we can derive a performance bound for \texttt{FILTER} by taking the minimum over the \texttt{NRMM} and IRL bounds.
\begin{corollary}{\textbf{\texttt{FILTER} Upper Bound:}}  Consider a set of policies $\{\pi_1, \dots, \pi_N\}$ with errors $\Bar{\epsilon}$ and $\Bar{\epsilon}_{\text{RL}}$ (Eqs. \eqref{eq:nrmm-eps}, \eqref{eq:rl-err}). Then, we have that ${\exists \pi \in \{\pi_1, \dots, \pi_N\}}$ s.t.
\begin{equation}
     J(\pi_E) - J(\pi) \leq \min \left ( \Bar{\epsilon} T^2, \Bar{\epsilon}_{\text{RL}}T \right ).
\end{equation}
\end{corollary}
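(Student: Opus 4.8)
The plan is to prove the two halves of the $\min$ separately for the common sequence $\{\pi_1,\dots,\pi_N\}$ and then note that the ``return $\pi_i$ with lowest validation error'' step in Algorithm~\ref{alg:filter} is exactly what lets us keep whichever guarantee is tighter. So I would structure the argument as: (i) the $\Bar{\epsilon}T^2$ branch, which is inherited from the \texttt{NRMM(BR)} analysis; (ii) the $\Bar{\epsilon}_{\text{RL}}T$ branch, which is the standard primal-IRL-style bound; (iii) a one-line combination.

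For branch (i), I would observe that \texttt{FILTER} still performs the \texttt{NRMM(BR)} cost-sensitive-classification update on (at least) the expert-reset portion of each round's data, so the regret sequence $\epsilon_i$ of Eq.~\eqref{eq:nrmm-eps} and its average $\Bar{\epsilon}$ are well-defined for $\pi_1,\dots,\pi_N$, and Theorem~\ref{thm:filter-br} applies verbatim: there exists $\pi \in \{\pi_1,\dots,\pi_N\}$ with $J(\pi_E) - J(\pi) \leq \Bar{\epsilon}T^2$.

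For branch (ii), I would use realizability. Since $r \in \mathcal{F}_r$, for every $i$ we have $J(\pi_E) - J(\pi_i) = J(\pi_E, r) - J(\pi_i, r) \leq \max_{f \in \mathcal{F}_r}\big(J(\pi_E, f) - J(\pi_i, f)\big)$. Averaging over $i$ and invoking the definition of $\Bar{\epsilon}_{\text{RL}}$ in Eq.~\eqref{eq:rl-err} gives $\frac{1}{N}\sum_i \big(J(\pi_E) - J(\pi_i)\big) \leq \frac{1}{N}\sum_i \max_{f}\big(J(\pi_E,f) - J(\pi_i,f)\big) = \Bar{\epsilon}_{\text{RL}}T$, so by pigeonhole some index $i$ satisfies $J(\pi_E) - J(\pi_i) \leq \Bar{\epsilon}_{\text{RL}}T$. (To justify that $\Bar{\epsilon}_{\text{RL}}$ is actually controllable rather than just defined, I would add the usual remark that annealing $\alpha \to 0$ turns \texttt{FILTER} into an off-policy RL step inside the primal loop of Algorithm~\ref{alg:primal-irl}, whose policy player is no-regret against a best-response discriminator, so vanishing regret plus the game value — which is $\leq 0$ since $\pi_E \in \Pi$ is realizable — bounds $\Bar{\epsilon}_{\text{RL}}$; but this is not needed for the corollary statement itself.)

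Combining, a $\Bar{\epsilon}T^2$-good policy and a $\Bar{\epsilon}_{\text{RL}}T$-good policy both lie in $\{\pi_1,\dots,\pi_N\}$, so validation selection (equivalently, taking the minimum of the two guarantees) yields $J(\pi_E) - J(\pi) \leq \min(\Bar{\epsilon}T^2, \Bar{\epsilon}_{\text{RL}}T)$. The main obstacle is not any hard inequality but bookkeeping: making sure $\Bar{\epsilon}$ (an NRPI-style classification regret measured on expert-reset states) and $\Bar{\epsilon}_{\text{RL}}$ (a best-response RL gap on the interpolated reset distribution) are both legitimately attached to one and the same run of \texttt{FILTER}, that the validation comparison is like-for-like, and that the usual $T$-versus-$(T{+}1)$ horizon constants are absorbed as elsewhere in the paper.
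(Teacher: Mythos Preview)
Your proposal is correct and matches the paper's approach: the corollary is stated without a standalone proof in the appendix, and the text simply says the bound is obtained ``by taking the minimum over the \texttt{NRMM} and IRL bounds,'' which is precisely your branch-(i)/branch-(ii) decomposition followed by the observation that one of the two witnessing policies achieves the $\min$. Your branch (ii) via realizability plus pigeonhole and your remark that validation selection realizes the $\min$ in practice are the natural elaborations of what the paper leaves implicit.
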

The expert reset probability $\alpha$ controls the trade-off or schedule of minimizing $\Bar{\epsilon}$ ($\alpha \approx 1$) versus $\Bar{\epsilon}_{\text{RL}}$ ($\alpha \approx 0$). Intuitively, \texttt{FILTER} inherits the transferability of the reward function across problems of IRL, has better robustness to inaccuracy in $\rho_E$ and compounding errors than \texttt{NRMM}, and is better able to handle recoverable situations than behavioral cloning.

Unfortunately, it is difficult to prove more about \texttt{FILTER}. This is because a learner's performance on a mixture of two distributions doesn't easily translate to a bound on their performance on either. Traditional approaches to deriving such a bound (e.g. as function of the $\mathcal{H}\Delta\mathcal{H}$ divergence \citep{ben2010theory}) produce vacuous bounds when applied to flexible hypothesis classes like neural networks. Similar difficulties have been encountered by others in the IRL community without resolution \citep{chang2015learning}. 

\section{Experiments}

We conduct experiments with the PyBullet Suite \cite{coumans2019}. We train experts using RL and then present all learners with 25 expert demonstrations to remove small-data concerns. As a simple behavioral cloning baseline matches expert performance under these conditions \citep{swamy2021moments}, we harden the problem by introducing randomization: with probability $p_{tremble}$, a random action gets executed in the environment rather than the one the policy chose. Our expert data is free from these corruptions. We also conduct experiments on the \texttt{antmaze-large} tasks from \citet{fu2020d4rl}, but with $p_{tremble}=0$.

\begin{figure*}[t]
    \centering
    \includegraphics[width=0.26\textwidth]{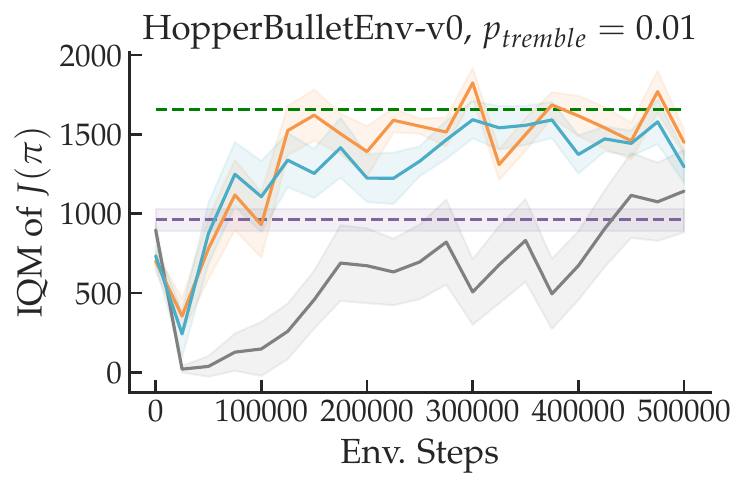}
    \includegraphics[width=0.26\textwidth]{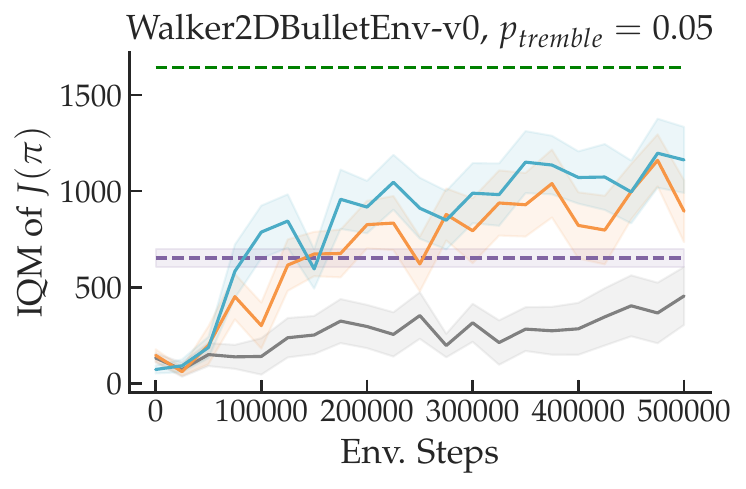}
    \includegraphics[width=0.26\textwidth]{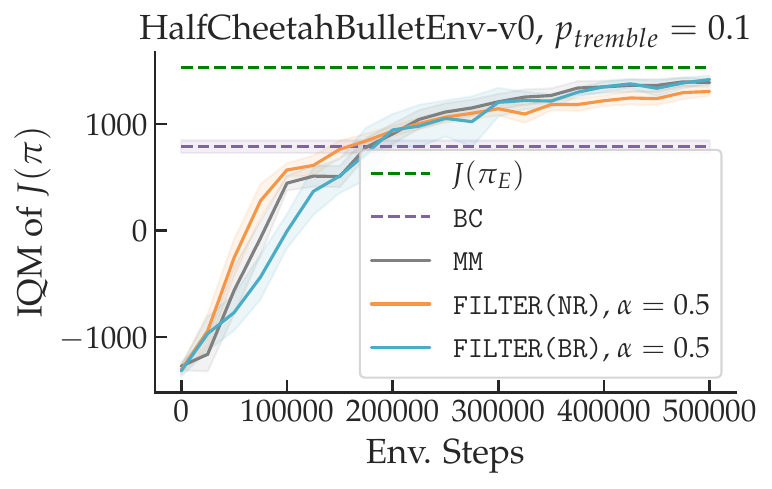}
    \includegraphics[width=0.25\textwidth]{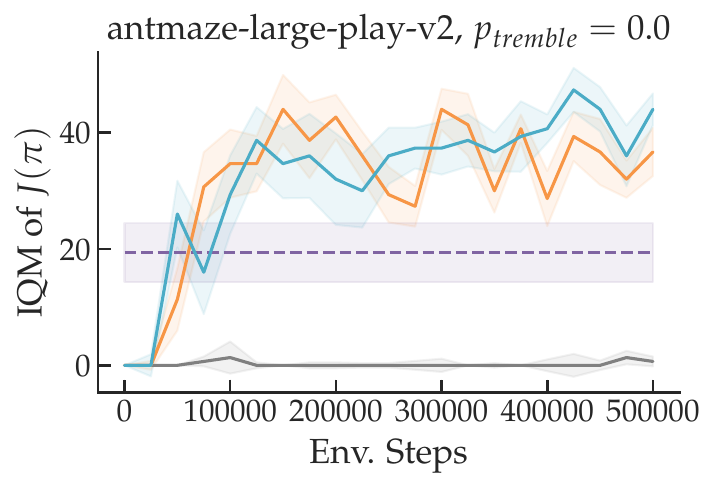}
    \includegraphics[width=0.25\textwidth]{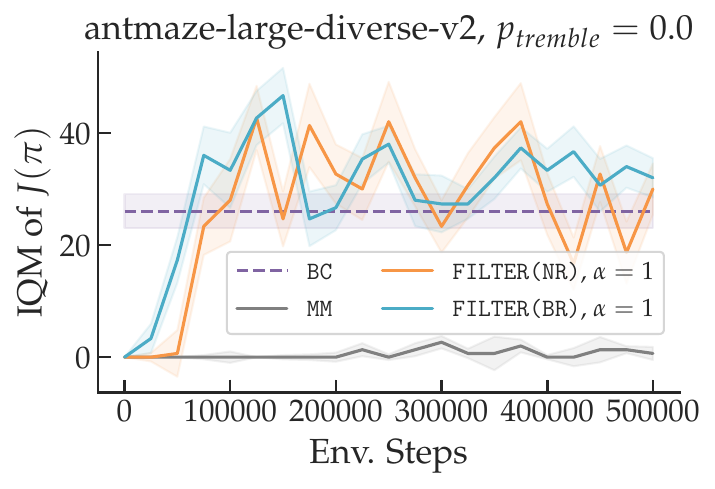}
    \caption{We see that both \texttt{FILTER(BR)} and \texttt{FILTER(NR)} out-performs standard \texttt{MM} and \texttt{BC} on 4 out of the 5 environments considered. Standard errors are computed across 10 seeds.
  }
    \label{fig:exp}
\end{figure*}

 We compare 4 algorithms: \texttt{FILTER(BR)}, \texttt{FILTER(NR)}, \texttt{MM} (i.e. Algorithm \ref{alg:primal-irl}, or, equivalently, \texttt{FILTER(NR)} with $\alpha=0$), and BC. \footnote{In practice, rather than perform policy improvement on just the first state from the learner suffix, we instead perform policy improvement
on states from the entire suffix (i.e. standard $Q$-learning). While in the worst case, this means that our performance bounds could degrade by a factor of $T$, in practice the benefits of leveraging the entire
suffix often outweighs the potential cost.} See Appendix \ref{app:exps} for details. We do not implement \texttt{MMDP} as these tasks can all last for $T=1000$ timesteps. We plot the performance of the policy as a function of the number of environment interactions used for policy optimization. \footnote{In some implementations of algorithms like GAIL, trajectories from the policy's replay buffer are used for training the discriminator rather than trajectories sampled post-policy-update. For \texttt{FILTER}, as we may only observe suffixes when $\alpha > 0$, we need to separately sample whole trajectories post-policy-update. To make the comparison fair, we do this for \texttt{MM} as well.} As recommended by \citet{agarwal2021deep}, we plot a robust statistic (i.e.\ the interquartile mean). Standard errors are computed across 10 runs.

 For our baseline moment-matching algorithm, we use a significantly improved version of GAIL \citep{ho2016generative}. Specifically, we switch from the Jensen-Shannon divergence to an integral probability metric (as recommended by \citet{swamy2021moments}), use the more efficient Soft Actor Critic \citep{haarnoja2018soft} or TD3+BC \citep{fujimoto2021minimalist} as our policy optimizers, add a gradient penalty to the discriminator \citep{gulrajani2017improved}, and use Optimistic Mirror Descent \citep{daskalakis2017training} to optimize both players for fast and last iterate convergence. See the appendix of \citet{swamy2022minimax} for an ablation of these changes. Taken together, these changes make our baseline a strong point of comparison, over which improvement is non-trivial.

In Figure \ref{fig:exp}, we see that \texttt{FILTER(BR)} and \texttt{FILTER(NR)} perform comparably and are significantly faster at finding strong policies than \texttt{MM} on 4/5 environments. We would recommend trying both variants when applying the algorithm in practice. To the best of our knowledge, the performance of \texttt{FILTER} on both variants of \texttt{antmaze} is the highest performance ever achieved by an algorithm that doesn't use any reward information. \footnote{We note that the performance we report for behavioral cloning on these environments is significantly higher than what is usually reported in the literature -- see Appendix \ref{app:exps} for details.}

It is also interesting to consider the difference in results between the environments we consider. In the Bullet locomotion environments, we found that $\alpha=0.5$ worked better than $\alpha=1$. We hypothesize that this is because the learner is able to learn to connect their initial state to sampled expert states more easily. For locomotion tasks, this might correspond to learning to accelerate before matching the expert’s gait. We tried a more complex annealing strategy but found that it did not outperform a fixed $\alpha=0.5$. However, we believe that for other problems, the annealing strategy could perform better than a fixed $\alpha$. For the AntMaze environments, we found that $\alpha=1$ worked better than lower values. We hypothesize that this is because of the difficulty of exploration in a maze, for which expert resets can help a lot. In general, we would recommend that the harder exploration is in a problem, the higher $\alpha$ should be set. We release the code we used for all of our experiments at \textbf{\texttt{\url{https://github.com/gkswamy98/fast_irl}}}. Of particular interest are the gym wrappers, which should be easily transferable to other algorithms / implementations.

\section{Discussion}
In summary, we provide multiple algorithms for more sample efficient inverse reinforcement learning, both in theory and practice. Our key insight is speeding up policy optimization via resetting the learner to states from expert demonstrations. We emphasize that due to the reduction-based analysis we perform, one could apply this technique to an arbitrary inverse reinforcement learning algorithm and not just the GAIL-like approach we use for experiments. 

One interesting avenue for future work is developing an algorithm with stronger guarantees in the interpolated case -- for example, one could imagine training two discriminators (one on trajectories from each start state distribution) and using the more pessimistic during learning. 

Another is to address an assumption fundamental to our approach: the ability to reset the learner to an arbitrary initial state. While this is possible in many (if not most) simulators, it is not clear how to do this in the real world or when one only has the "trace" model of access (i.e. resets only to a fixed initial state distribution).

Lastly, one could also further investigate where sub-optimal data could be used in our procedure. For example, one could mix it with the expert data and use this mixture distribution for resets if only a limited number of demonstrations are available. As long as we still use the expert data for reward selection, we conjecture that similar guarantees to the ones we prove above would hold.

\section{Acknowledgments}
We thank Vasilis Syrgkanis and Juntao Ren for pointing out several typos. ZSW is supported in part by the NSF FAI Award \#1939606, a Google Faculty Research Award, a J.P. Morgan Faculty Award, a Facebook Research Award, an Okawa Foundation Research Grant, and a Mozilla Research Grant. GS is supported computationally by a GPU award from NVIDIA and emotionally by his family and friends. %

\clearpage

\bibliographystyle{unsrtnat}
\bibliography{references}  %

\appendix
\onecolumn
\section{Proofs}
\localtableofcontents
\label{app:proofs}
\subsection{Sample Complexity Lemma}
    We follow standard techniques and include the proof here mostly for completeness.
\begin{lemma}
    Consider $C$ deterministic functions of a random variable, each with range $R$. If we draw
    \begin{equation}
    m \geq O\left(\log\left(\frac{C}{\delta}\right) \frac{R^2}{\epsilon^2}\right)
\end{equation}
    samples, we have that with probability $\geq 1 - \delta$, we will be able to estimate all $C$ population means within $\epsilon$ absolute error.
    \label{lem:sc}
\end{lemma}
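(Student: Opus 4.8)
The plan is to prove this as a direct application of Hoeffding's inequality together with a union bound. Let $g_1, \dots, g_C$ denote the $C$ deterministic functions of the underlying random variable $X$, each taking values in an interval of width $R$. For a fixed $k \in [C]$, consider $m$ i.i.d.\ samples $X_1, \dots, X_m$ and the empirical mean $\hat{\mu}_k = \frac{1}{m}\sum_{j=1}^m g_k(X_j)$, whose expectation is the population mean $\mu_k = \mathbb{E}[g_k(X)]$. Since each $g_k(X_j)$ lies in a range of width $R$, Hoeffding's inequality gives
\begin{equation}
    \Pr\left( |\hat{\mu}_k - \mu_k| > \epsilon \right) \leq 2\exp\left(-\frac{2m\epsilon^2}{R^2}\right).
\end{equation}

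Next I would take a union bound over all $C$ functions: the probability that \emph{any} empirical mean deviates from its population mean by more than $\epsilon$ is at most $2C\exp(-2m\epsilon^2/R^2)$. Setting this quantity to be at most $\delta$ and solving for $m$, it suffices to have $2C\exp(-2m\epsilon^2/R^2) \leq \delta$, i.e.\ $m \geq \frac{R^2}{2\epsilon^2}\log\left(\frac{2C}{\delta}\right)$, which is $O\left(\log\left(\frac{C}{\delta}\right)\frac{R^2}{\epsilon^2}\right)$ as claimed. On the complementary event, which has probability at least $1 - \delta$, all $C$ population means are simultaneously estimated to within $\epsilon$ absolute error.

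There is essentially no hard step here — the result is standard and, as the authors note, included mostly for completeness. The only mild subtlety worth flagging is the dependence on the range: Hoeffding requires a bounded range of width $R$ for each $g_k$, so one should state clearly that $R$ bounds $\max_k(\sup g_k - \inf g_k)$ (or twice the absolute bound $\|g_k\|_\infty$, up to constants absorbed into the $O(\cdot)$). If one wanted a variance-dependent refinement one could invoke Bernstein's inequality instead, but that is not needed for the stated bound. When this lemma is later invoked (e.g.\ for Lemmas~\ref{thm:mmdp-sc} and~\ref{thm:nrmm-sc}), the functions will be the payoff estimates indexed by $(\pi, f) \in \Pi \times \mathcal{F}_r$, giving $C = |\Pi||\mathcal{F}_r|$, and the range $R$ will be $O(T|\mathcal{A}|)$ after accounting for the importance-weighting from the uniform random action, which is where the $T^3|\mathcal{A}|^2/\epsilon^2$ dependence comes from.
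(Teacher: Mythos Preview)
Your proposal is correct and follows essentially the same approach as the paper: apply Hoeffding's inequality to each of the $C$ bounded functions, take a union bound, and solve for $m$. The paper's proof is line-for-line the same argument, so there is nothing to add.
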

\begin{proof}
Consider a bounded random variable $X$ with range $R$. A standard Hoeffding bound tells us that
\begin{equation}
    P(|\frac{1}{m}\sum_{i=0}^m X_i - \mathbb{E}[X]| \geq \epsilon) \leq 2 \exp(\frac{-2m \epsilon^2}{R^2}).
\end{equation}
If we have $C$ such variables and want to be within $\epsilon$ of the population mean uniformly, a union bound tells us that we will do so with probability at least
\begin{equation}
    1 - 2C \exp(\frac{-2m \epsilon^2}{R^2}).
\end{equation}
If we want to satisfy this condition with probability at least $1 - \delta$, simple algebra tells us that we must draw
\begin{equation}
    m \geq O\left(\log\left(\frac{C}{\delta}\right) \frac{R^2}{\epsilon^2}\right)
\end{equation}
samples.
\end{proof}

\subsection{Proof of Theorem \ref{thm:irl-exp}}
\label{pf:thm:irl-exp}
This construction is essentially a slight generalization of that of \citet{kakade2003sample} to the case with multiple reward functions.
\begin{proof}
Consider a tree-structured MDP with branching factor $|\mathcal{A}|$ and deterministic dynamics. The expert always takes the left-most action and therefore always ends up at the left-most node. Let $\mathcal{F}_r$ be the set of sparse reward functions that are $1$ at a single leaf node and 0 everywhere else. Let $\Pi$ be the full set of deterministic policies (i.e. paths to a leaf node). Note that $|\mathcal{F}_r| = |\Pi| = |\mathcal{A}|^T$. Also note that $V_{max} = 1$ and that only one $\pi \in \Pi$ achieves nonzero reward under the true reward function, so one needs to find $\pi_E$ to satisfy the condition in the theorem statement.

Let us first analyze the dual version of IRL (Algorithm \ref{alg:dual-irl}). At each iteration, the policy player solves a fresh RL problem with $r = -f \in \mathcal{F}_r$. As all $f \in \mathcal{F}_r$ are sparse, the learner needs to visit all nodes in the tree to find which one provides reward. As $|\mathcal{S}| \geq \Omega(|\mathcal{A}|^T)$, this must take at least $\Omega(|\mathcal{A}|^T)$ interactions with the environment.

We now analyze the primal version of IRL (Algorithm \ref{alg:primal-irl}). While for $i > 1$ there could now exist multiple leaf nodes with reward under aggregate reward function $r = \frac{-1}{i} \sum_{j=1}^i f_j$, the learner has to contend with the fact that rewards corresponding to certain leaf nodes could have been chosen more than once by the adversary, giving  reward $> \frac{1}{i}$. Thus, the learner still needs to visit all leaf nodes, which again takes $\Omega(|\mathcal{A}|^T)$ interactions with the environment

\end{proof}

\subsection{Proof of Lemma \ref{thm:mmdp-sc}}
\label{pf:thm:mmdp-sc}
\begin{proof}
        At the $t$th iteration of \texttt{MMDP}, we are solving a two-player zero-sum game over strategy spaces $\Pi$ and $\mathcal{F}_r$ with payoff given by Equation \ref{eq:mmdp-payoff}. All interaction with the environment happens during the collection of $\mathcal{D}_t$ so we analyze how many iterations $M$ we must perform to estimate the payoff matrix within $\epsilon_t$ uniformly w.p $\geq 1- \delta$.
        
        First, note that there are $C = |\Pi||\mathcal{F}_r|$ elements in the matrix. Second, observe that each element of the matrix is within $[-T, T]$ before the $\frac{1}{T}$ normalization. Third, notice that the outer expectation in the first half of Equation \ref{eq:mmdp-payoff} is taken with respect to the policy while we collect data by sampling $a_t$ uniformly at random. Thus, to estimate this term, we use importance weighting between the learner and uniform policies. The maximum value of such a weight (corresponding to a deterministic learner policy) is $\frac{1}{|\mathcal{A}|}$. Thus, the overall scale of the random variable corresponding to each element of the payoff matrix is $R = T|\mathcal{A}|$. Now, applying Lemma \ref{lem:sc}, we see that we need
        \begin{equation}
        M \geq O\left(\log\left(\frac{|\Pi||\mathcal{F}_r|}{\delta}\right) \frac{(T|\mathcal{A}|)^2}{\epsilon^2}\right)
        \end{equation}
        trajectories, each of which could take $O(T)$ interactions with the environment, giving us an overall interaction complexity bound of
        \begin{equation}
        O\left(\log\left(\frac{|\Pi||\mathcal{F}_r|}{\delta}\right) \frac{T^3|\mathcal{A}|^2}{\epsilon^2}\right) \leq \text{poly}(T, |\mathcal{A}|, \frac{1}{\epsilon}, \log(\frac{1}{\delta}), \log(|\Pi|), \log(|\mathcal{F}_r|).
        \end{equation}

        Observe that with this many samples, we are able to estimate all elements of the $\pi$ and $f$ payoff matrix to within $\epsilon$ w.p. $\geq 1 - \delta$. Thus, the error we could accumulate by optimizing over the empirical rather than the population payoff matrix is bounded by $\epsilon \leq \epsilon_t$. 
\end{proof}

\subsection{Proof of Theorem \ref{thm:mmdp}}
\label{pf:thm:mmdp}
\begin{proof}
    Let $Q_f^{\pi^t \dots \pi^T}(s, a)$ denote the expected cumulative value of $f$ on trajectories generated by rolling out $\pi^t$ through $\pi^T$ starting from $(s, a)$. Then, via the Performance Difference Lemma \citep{kakade2002approximately},
    \begin{align}
        J(\pi_E) - J(\pi) &= \sum_{t=0}^T \mathbb{E}_{\xi \sim \pi_E}[Q_r^{\pi^{t+1} \dots \pi^T}(s_t, a_t) - \mathbb{E}_{a \sim \pi^t}[Q_r^{\pi^{t+1} \dots \pi^T}(s_t, a)]] \\
        &\leq \sum_{t=0}^T \sup_{f_t \in \mathcal{F}_r} \mathbb{E}_{\xi \sim \pi_E}[Q_{f_t}^{\pi^{t+1} \dots \pi^T}(s_t, a_t) - \mathbb{E}_{a \sim \pi^t}[Q_{f_t}^{\pi^{t+1} \dots \pi^T}(s_t, a)]] \\
        &\leq \sum_{t=0}^T T \epsilon_t = \Bar{\epsilon}T^2.
    \end{align}
\end{proof}

\subsection{Proof of Theorem \ref{thm:mmdp-lb}}
\label{pf:thm:mmdp-lb}
\begin{proof}
    We consider the \textsc{Cliff} MDP of \citet{swamy2021moments}, which we reproduce here for convenience.
    \begin{figure}[H]
        \centering
        \begin{tikzpicture}[scale=1, transform shape]
    \node (a) [draw, very thick, circle] at (0.0, 0) {$s_0$};
    \node (b) [draw, very thick, circle, color=expert] at (1.5, 0) {$s_1$};
    \node (c) [draw, very thick, circle, color=expert]  at (3, 0) {$s_2$};
    \node (d) [circle]  at (4.5, 0) {$\ldots$};
    \node (e) [draw, very thick, circle, color=error]  at (1.5, -1.5) {$s_x$};
    \path[->,] (a) to node[midway, above, color=expert] {$a_1$} (b);
    \draw [->, very thick, color=expert] (a) to (b);
    \path[->] (b) to node[midway, above, color=expert] {$a_1$} (c);
    \draw [->, very thick, color=expert] (b) to (c);
    \path[->] (c) to node[midway, above, color=expert] {$a_1$} (d);
    \draw [->, very thick, color=expert] (c) to (d);
    \path[->] (a) to node[midway, left, color=error] {$a_2$} (e);
    \draw [->, very thick, color=error] (a) to (e);
    \path[->] (b) to node[midway, left, color=error] {$a_2$} (e);
    \draw [->, very thick, color=error] (b) to (e);
    \path[->] (c) to node[midway, left, color=error] {$a_2$} (e);
    \draw [->, very thick, color=error] (c) to (e);

    \node [circle, minimum size=0.5cm](g) at ([{shift=(270:0.4)}]e){};
    \coordinate (h) at (intersection 2 of e and g);
    \coordinate (i) at (intersection 1 of e and g);
    \tikzAngleOfLine(g)(i){\AngleStart}
    \tikzAngleOfLine(g)(h){\AngleEnd}
    \draw[very thick,->]%
    let \p1 = ($ (g) - (i) $), \n2 = {veclen(\x1,\y1)}
    in
        (g) ++(270:0.5) node{$a_1$}
        (i) arc (\AngleStart-360:\AngleEnd:\n2);
    \end{tikzpicture}
        \label{fig:cliff_mdp}
    \end{figure}
    Assume the expert always takes $a_1$ and $r(s, a) = -\mathds{1}_{s_x} -\mathds{1}_{a_2}$. Thus, $J(\pi_E, r) = 0$. Assume that $\mathcal{F}_r = \{r\}$.
    
    Let $\pi_a$ be the policy that takes $a_2$ with prob. $\epsilon T$ in $s_0$ and $a_1$ otherwise. Let $\pi_b$ be the policy that always takes $a_1$. Let $\pi = \{\pi_a, \pi_b, \dots\}$ be the sequence of policies returned by \texttt{MMDP}. 
    
    For the first $T-1$ steps of the algorithm, $\epsilon_t = 0$ as the learner plays $\pi_b$. On the last step of the algorithm, the learner picks a policy $\pi_a$ which makes mistakes for the rest of the horizon w.p. $\epsilon T$, giving it a moment matching error of $\epsilon_1 = \epsilon T$. Thus, overall, $\pi$ has average moment-matching error $\Bar{\epsilon} = \frac{1}{T}(\epsilon T + \sum_{t=2}^T 0) = \epsilon$. However, on rollouts, the learner would have an $\epsilon T$ chance of paying a cost of 1 for the rest of the horizon, leading to a lower bound of $J(\pi_E, r) - J(\pi, r) = \epsilon T^2 \geq \Omega(\epsilon T^2)$.
\end{proof}

\subsection{Proof of Lemma \ref{thm:nrmm-sc}}
\label{pf:thm:nrmm-sc}
\begin{proof}
    We proceed similarly to the proof of Theorem \ref{thm:mmdp-sc}. All interaction with the environment happens during the $M$ interactions with the environment. As before, we are estimating a payoff matrix with $C = |\Pi||\mathcal{F}_r|$ elements within $\epsilon_i$ uniformly w.p $\geq 1- \delta$. Each element has scale $R = T|\mathcal{A}|$. Applying Lemma \ref{lem:sc}, we see that we need
        \begin{equation}
        M \geq O\left(\log\left(\frac{|\Pi||\mathcal{F}_r|}{\delta}\right) \frac{(T|\mathcal{A}|)^2}{\epsilon_i^2}\right)
        \end{equation}
        trajectories, each of which could take $O(T)$ interactions with the environment, giving us an overall interaction complexity bound of
        \begin{equation}
        O\left(\log\left(\frac{|\Pi||\mathcal{F}_r|}{\delta}\right) \frac{T^3|\mathcal{A}|^2}{\epsilon^2}\right) \leq \text{poly}(T, |\mathcal{A}|, \frac{1}{\epsilon}, \log(\frac{1}{\delta}), \log(|\Pi|), \log(|\mathcal{F}_r|).
        \end{equation}

        Observe that with this many samples, we are able to estimate all elements of the $\pi$ and $f$ payoff matrix to within $\epsilon$ w.p. $\geq 1 - \delta$. Thus, the error we could accumulate by optimizing over the empirical rather than the population payoff matrix is bounded by $\epsilon \leq \epsilon_i$. To complete the proof, observe that this bounds the optimization error (i.e. difference in value between $\pi_i$ and the per-round best response policy when plugged into Eq. \ref{eq:nrmm-eps}) which upper bounds the instantaneous regret (i.e. difference in value between $\pi_i$ and the best-in-hindsight policy when plugged into Eq. \ref{eq:nrmm-eps}).
    
\end{proof}

\subsection{Proof of Theorem \ref{thm:filter-br}}
\label{pf:thm:filter-br}
\begin{proof}
First, we note that
    \begin{align}
        J(\pi_E) - J(\pi) &= \sum_{t=1}^T \mathbb{E}_{\xi \sim \pi_E}[Q_r^{\pi}(s_t, a_t) - \mathbb{E}_{a \sim \pi}[Q_r^{\pi}(s_t, a)]] \\
        &= \sum_{t=1}^T \mathbb{E}_{s, a \sim \rho_t^E}[ \mathbb{E}_{\xi \sim \pi|s, a}[\sum_{\tau = t}^T r(s_{\tau}, a_{\tau})]] - \mathbb{E}_{a' \sim \pi(s)}[\mathbb{E}_{\xi \sim \pi|s, a'}[\sum_{\tau = t}^T r(s_{\tau}, a_{\tau})]]] \label{eq:suffix}\\
        &= \sum_{t=1}^T \mathbb{E}_{s_t, a_t \sim \rho_t^{E}}[r(s_t, a_t)] - \mathbb{E}_{s_t, a_t \sim \rho_t^{\pi}}[r(s_t, a_t)]. \label{eq:traj}
    \end{align}

The first equality is via the PDL, the second via the definition of a $Q$ function, and the third by the definition of $J$. Next, we set
\begin{equation}
    f_i^* = \arg\max_{f \in \mathcal{F}_r} \sum_{t=1}^T \mathbb{E}_{s_t, a_t \sim \rho_t^{\pi_i}}[f(s_t, a_t)] - \mathbb{E}_{s_t, a_t \sim \rho_t^{E}}[f(s_t, a_t)]
\end{equation}
and define
\begin{equation}
    L_i(\pi, f) = \frac{1}{T} \sum_{t=0}^T \mathbb{E}_{s \sim \rho_t^{E}}[\mathbb{E}_{a \sim \pi(s)}[\mathbb{E}_{\xi \sim \pi_i|s, a}[\sum_{\tau = t}^T f(s_{\tau}, a_{\tau})]]].
\end{equation}
Note the iteration-indexed "roll-out" policy. We use this sequence of loss functions to define a regret measure,
\begin{equation}
    \Bar{\epsilon} = \frac{1}{NT} \sum_{i=1}^N L_i(\pi_i, f_i^*) - \min_{\pi \in \Pi}\frac{1}{NT} \sum_{i=1}^N L_i(\pi, f_i^*) \in [-1, 1],
\end{equation}
and $\Bar{\pi}$ to denote the uniform mixture over policy iterates. Now, by our earlier equalities,
\begin{align}
    J(\pi_E) - J(\Bar{\pi})  &= \frac{1}{N} \sum_{i=1}^N J(\pi_E) - J(\pi_i) \\
    &= \frac{1}{N} \sum_{i=1}^N \sum_{t=1}^T \mathbb{E}_{s, a \sim \rho_E^t}[\mathbb{E}_{\xi \sim \pi_i|s, a}[\sum_{\tau = t}^T r(s_{\tau}, a_{\tau})]] - \mathbb{E}_{a' \sim \pi_i(s)}[\mathbb{E}_{\xi \sim \pi_i|s, a'}[\sum_{\tau = t}^T r(s_{\tau}, a_{\tau})]]] \\
    &\leq \sup_{f \in \mathcal{F}_r} \frac{1}{N} \sum_{i=1}^N \sum_{t=1}^T \mathbb{E}_{s, a \sim \rho_E^t}[\mathbb{E}_{\xi \sim \pi_i|s, a}[\sum_{\tau = t}^T f(s_{\tau}, a_{\tau})]] - \mathbb{E}_{a' \sim \pi_i(s)}[\mathbb{E}_{\xi \sim \pi_i|s, a'}[\sum_{\tau = t}^T f(s_{\tau}, a_{\tau})]]] \\
    &\leq \frac{1}{N} \sum_{i=1}^N \sup_{f_i \in \mathcal{F}_r} \sum_{t=1}^T \mathbb{E}_{s, a \sim \rho_E^t}[\mathbb{E}_{\xi \sim \pi_i|s, a}[\sum_{\tau = t}^T f_i(s_{\tau}, a_{\tau})]] - \mathbb{E}_{a' \sim \pi_i(s)}[\mathbb{E}_{\xi \sim \pi_i|s, a'}[\sum_{\tau = t}^T f_i(s_{\tau}, a_{\tau})]]] \\
    &= \frac{1}{N} \sum_{i=1}^N \sup_{f_i \in \mathcal{F}_r} T (L_i(\pi_i, f_i) - L_i(\pi_E, f_i)) \\
    &= \frac{1}{N} \sum_{i=1}^N T (L_i(\pi_i, f_i^*) - L_i(\pi_E, f_i^*)).
\end{align}
Set $\pi^* = \arg\min_{\pi \in \Pi} \sum_{i=1}^N L_i(\pi, f_i^*)$. Continuing,
\begin{align}
    J(\pi_E) - J(\Bar{\pi})
    &\leq \frac{1}{N} \sum_{i=1}^N T (L_i(\pi_i, f_i^*) - L_i(\pi^*, f_i^*)) \\
    &= \Bar{\epsilon} T^2.
\end{align}

Then, because at least one member in a sequence must perform as well as the mixture, we know that $J(\pi_E) - J(\pi)\leq \Bar{\epsilon} T^2$, where $\pi \in \{\pi_1, \dots, \pi_N\}$ is the member with the lowest validation error.
\end{proof}

\subsection{Proof of Theorem \ref{thm:filter-nr}}
\label{pf:thm:filter-nr}
\begin{proof}
    We define $L_i$ and $\Bar{\epsilon}$ as before, i.e.
    \begin{equation}
    L_i(\pi, f) = \frac{1}{T} \sum_{t=0}^T \mathbb{E}_{s \sim \rho_t^{E}}[\mathbb{E}_{a \sim \pi(s)}[\mathbb{E}_{\xi \sim \pi_i|s, a}[\sum_{\tau = t}^T f(s_{\tau}, a_{\tau})]]],
\end{equation}
\begin{equation}
    \Bar{\epsilon} = \frac{1}{NT} \sum_{i=1}^N L_i(\pi_i, f_i) - \min_{\pi \in \Pi}\frac{1}{NT} \sum_{i=1}^N L_i(\pi, f_i) \in [-1, 1].
\end{equation}
Additionally, we define an average regret measure for the adversary:
\begin{equation}
    \Bar{\delta} = \max_{f \in \mathcal{F}_r}\frac{1}{NT} \sum_{i=1}^N L_i(\pi_i, f) - \frac{1}{NT} \sum_{i=1}^N L_i(\pi_i, f_i) \in [-1, 1].
\end{equation}
Note that
\begin{equation}
    \Bar{\epsilon} + \Bar{\delta} = \frac{1}{NT} (\max_{f \in \mathcal{F}_r} \sum_{i=1}^N L_i(\pi_i, f) - \min_{\pi \in \Pi} \sum_{i=1}^N L_i(\pi, f_i)).
\end{equation}

Proceeding as before,
\begin{align}
    J(\pi_E) - J(\Bar{\pi}) &\leq \max_{f \in \mathcal{F}_r} \frac{1}{N} \sum_{i=1}^N  J(\pi_i, f) - J(\pi_E, f) \\
    &= \max_{f \in \mathcal{F}_r} \frac{1}{N} \sum_{i=1}^N T (L_i(\pi_i, f) - L_i(\pi_E, f)) \\
    &\leq \frac{1}{N} \sum_{i=1}^N \max_{f_i \in \mathcal{F}_r} T (L_i(\pi_i, f_i) - L_i(\pi_E, f_i)) \\
    &\leq \frac{1}{N} \min_{\pi \in \Pi} \sum_{i=1}^N \max_{f_i \in \mathcal{F}_r} T (L_i(\pi_i, f_i) - L_i(\pi, f_i)) \\
    &= \frac{1}{N} \sum_{i=1}^N T (\Bar{\epsilon}T + \Bar{\delta}T) \\
    &= (\Bar{\epsilon} + \Bar{\delta}) T^2.
\end{align}
\end{proof}

\subsection{Proof of Theorem \ref{thm:filter-lb}}
\label{pf:thm:filter-lb}
\begin{proof}
We again consider the \textsc{Cliff} MDP. As before, assume that the expert always takes $a_1$, $r(s, a) = -\mathds{1}_{s_x} -\mathds{1}_{a_2}$, and that $\mathcal{F}_r = \{r\}$.

Let $\pi$ be a policy that takes $a_2$ in $s_0$ with prob. $\epsilon T$ and $a_1$ with prob. 1 everywhere else. Thus, on a $\frac{T-1}{T}$ fraction of the rollouts, there is no difference between the learner on the expert. On the $\frac{1}{T}$ fraction of rollouts that start from $s_0$, the learner diverges from the expert for the entire horizon with probability $\epsilon T$, so the discriminator can penalize it $\epsilon T^2$ on average. Putting it all together, $\Bar{\epsilon} = \frac{1}{T}[\frac{T-1}{T}(0) + \frac{1}{T}(\epsilon T)] = \epsilon$. The outer $\frac{1}{T}$ comes from the average over timesteps in the payoff. 

On rollouts, the learner would have an $\epsilon T$ chance of paying a cost of 1 for the rest of the horizon (as they always start at $s_0$), leading to a lower bound of $J(\pi_E, r) - J(\pi, r) = \epsilon T^2 \geq \Omega(\epsilon T^2)$. 
\end{proof}
We note that if we started on the true start-state distribution ($\alpha = 0$), we would instead get an $\Bar{\epsilon} = \epsilon T$ and therefore a bound linear in the horizon, recovering the lower bound results in \citet{swamy2021moments}.

\begin{theorem}
    The trajectory-based sampling procedure implied by Equation \ref{eq:traj} is lower variance than the suffix-based sampling procedure implied by Equation \ref{eq:suffix}.
    \label{thm:var}
\end{theorem}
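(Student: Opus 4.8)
The plan is to argue that both procedures are unbiased Monte Carlo estimators of the \emph{same} scalar — the discriminator objective $L(\pi,f)=J(\pi,f)-J(\pi_E,f)$ — and then to separate their variances. Unbiasedness is already in hand: the chain of equalities in the proof of Theorem~\ref{thm:filter-br} shows that $L(\pi,f)$ equals Eq.~\eqref{eq:traj} and equals $-$Eq.~\eqref{eq:suffix}, so the plug-in estimators $\hat L_{\mathrm{traj}}$ and $\hat L_{\mathrm{suffix}}$ obtained by replacing the outer expectations with sample averages are both unbiased (and the same holds coordinate-wise for the gradient in $f$ used to update the discriminator). For the trajectory procedure, a single sample is one rollout $\xi\sim\pi$ and one independent expert rollout $\xi_E\sim\pi_E$, reporting $\sum_{t}f(s_t,a_t)-\sum_{t}f(s_t^E,a_t^E)$; since $f\in[-1,1]$ this quantity lies in $[-2T,2T]$, so $\mathrm{Var}(\hat L_{\mathrm{traj}})\le 4T^2$, and refining through $\mathrm{Var}\!\big(\sum_t f(s_t,a_t)\big)\le\big(\sum_t\sqrt{\mathrm{Var}\,f(s_t,a_t)}\big)^2\le T^2$ gives the sharper $\mathrm{Var}(\hat L_{\mathrm{traj}})\le 2T^2$.

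For the suffix procedure, a single sample draws $t\sim\mathrm{Unif}([T])$ and $s_t\sim\rho_E^t$, rolls $\pi$ out from $(s_t,a_t^E)$ and from $(s_t,a_t^\pi)$, and reports $T\cdot(\hat Q-\hat Q')$, where $\hat Q=\sum_{\tau=t}^T f(s_\tau,a_\tau)$ along the first suffix, $\hat Q'$ along the second, and the factor $T$ is the importance correction for sampling a single $t$ uniformly instead of summing all $T$ terms of Eq.~\eqref{eq:suffix}. This quantity has range $[-2T^2,2T^2]$, so its variance is a priori only $O(T^4)$, and I would show this is essentially tight by the law of total variance, conditioning on $(t,s_t,a_t^E,a_t^\pi)$: the ``between'' term is $\mathrm{Var}\big(T(Q_f^\pi(s_t,a_t^E)-V_f^\pi(s_t))\big)$, which is $O(T^2)$, while the ``within'' term is $T^2\,\mathbb{E}\big[\mathrm{Var}(\hat Q-\hat Q'\mid t,s_t,a_t^E,a_t^\pi)\big]$, and a single-rollout suffix sum $\hat Q$ can itself carry variance as large as $\Omega(T^2)$ — e.g.\ on the \textsc{Cliff} MDP, where rolling $\pi$ out incurs $-1$ per step after a randomly-timed fall, so $\hat Q\approx-(T-\sigma)$ for a roughly-uniform $\sigma$. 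Multiplying, $\mathrm{Var}(\hat L_{\mathrm{suffix}})=\Omega(T^4)$ on such instances; even with only mild stochasticity (a length-$T$ random-walk suffix, variance $\Theta(T)$) one already gets $\Omega(T^3)$, which dwarfs the $O(T^2)$ bound for $\hat L_{\mathrm{traj}}$.

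The structural point I would foreground is that the trajectory estimator never forms a Monte Carlo estimate of a $Q$- or $V$-function: it reports an empirical return, which is automatically confined to $[-T,T]$, so its only variance is that of the return; the suffix estimator instead estimates $Q_f^\pi$ at off-distribution (expert) states by a single long rollout and then rescales by $T$, compounding the intrinsic $\Theta(T^2)$ variance of a length-$T$ suffix sum with the $T^2$ blow-up from the uniform-$t$ importance weight. I expect the main obstacle to be how strong a statement of ``lower variance'' to target: a clean pointwise domination $\mathrm{Var}(\hat L_{\mathrm{suffix}})\ge\mathrm{Var}(\hat L_{\mathrm{traj}})$ does \emph{not} come from a single Rao--Blackwell coupling, because the two procedures have no natural common sub-$\sigma$-algebra (the suffix rollouts originate from expert states $\rho_E^t$, whereas the trajectory estimator rolls $\pi$ out from its own initial-state distribution), so conditioning the suffix estimator on its randomness does not collapse it to the trajectory estimator. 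I would therefore prove the comparison at the level of the variance \emph{bounds} — the $O(T^2)$ ceiling for Eq.~\eqref{eq:traj} versus the $\Omega(T^4)$ floor attained by Eq.~\eqref{eq:suffix} on an explicit instance — which already suffices to justify the practical recommendation of trajectory-level discriminator training in Algorithm~\ref{alg:filter}.
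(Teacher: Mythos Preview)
Your proposal diverges from the paper in both the definition of the suffix estimator and the proof technique, and the first of these is a real discrepancy. The paper's proof explicitly defines the suffix procedure as a \emph{sum over all} $t\in[T]$: for each $t$, draw $(s_t,a_t)\sim\rho_E^t$, execute two suffix rollouts of length $T-t$ (one seeded with the expert action, one with $\pi$'s), and accumulate the differences across $t$. There is no uniform sampling of $t$ and hence no importance weight $T$; your version introduces both, and the resulting $T^2$ blow-up is doing most of the work in your $\Omega(T^4)$ floor. Under the paper's definition, the suffix variance on a \textsc{Cliff}-style instance would be $\Theta(T^3)$, not $\Theta(T^4)$, so you are comparing the trajectory procedure against a strictly noisier suffix estimator than the one the paper intends. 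Similarly, the paper's trajectory procedure draws a fresh length-$t$ rollout at each $t$ and uses only the terminal pair, rather than a single full rollout, though this distinction matters less.

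On the argument itself, the paper does something much simpler than your bound-versus-bound separation: it posits two extreme correlation structures --- total independence across timesteps ($\mathrm{Var}(r(s_t,a_t))=\sigma^2$ i.i.d.) and total dependence ($r(s_t,a_t)=r(s_0,a_0)$ for all $t$) --- and computes both variances exactly in each case, obtaining $T(T-1)\sigma^2$ and $\tfrac{1}{3}T(T+1)(2T+1)\sigma^2$ for the suffix procedure against $2T\sigma^2$ for the trajectory procedure in both. Like your proposal, this does not establish pointwise dominance for arbitrary MDPs (your observation that no Rao--Blackwell coupling is available is correct and worth keeping), but it trades your instance construction and asymptotic bounds for two short direct calculations at the endpoints of the correlation spectrum. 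If you want to align with the paper, drop the uniform-$t$ importance weighting, adopt its per-$t$ suffix definition, and redo the variance computation under those two correlation models.
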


\subsection{Proof of Theorem \ref{thm:var}}
\label{pf:thm:var}
\begin{proof}
First, let us explicitly define the sampling procedure implied by each of the above. At step $t$:
\begin{itemize}
    \item Equation \eqref{eq:suffix}: Sample a state-action pair from the  expert visitation distribution at timestep $t$. Reset the learner to this state. Execute the sampled action and then roll out the learner for $T - t$ timesteps, adding up the reward function along this suffix. Sample a state-action pair from the  expert visitation distribution at timestep $t$. Reset the learner to this state. Roll out the learner for $T - t + 1$ timesteps, adding up the reward function along this suffix. Record the difference of these two suffix sums.
    \item Equation \eqref{eq:traj}: Roll out the current learner policy for $t$ timesteps. Use the sample at timestep $t$ for evaluating the reward function. Sample a state-action pair from the  expert visitation distribution at timestep $t$. Use this sample for evaluating the reward function. Record the difference of these two single-step evaluations.
\end{itemize}

We consider two settings:
\begin{enumerate}
    \item Total independence between timesteps: $\forall t \in [T]$, $\text{Var}(r(s_t, a_t)) = \sigma^2$.
    \item Total dependence (determinism) between timesteps: $\text{Var}(r(s_0, a_0)) = \sigma^2$, $\forall t \in [1, T]$, $r(s_t, a_t) = r(s_0, a_0)$.
\end{enumerate}
Let's begin with Case 1. Equation \eqref{eq:suffix} has variance 
\begin{equation}
    \sum_t^T (T-t) \sigma^2 + (T-t) \sigma^2 = T(T-1) \sigma ^2,
\end{equation}
while Equation \eqref{eq:traj} has variance
\begin{equation}
    \sum_t^T \sigma ^2 + \sigma ^ 2 = 2 T \sigma^2.
\end{equation}
Observe that $2 T \sigma^2 < T(T-1) \sigma ^2$ to complete this case. Similarly, for Case 2, Equation \eqref{eq:suffix} has variance 
\begin{equation}
    \sum_t^T (T-t)^2 \sigma^2 + (T-t)^2 \sigma^2 = \frac{\sigma^2 (T)(T+1)(2T+1)}{3},
\end{equation}
and Equation \eqref{eq:traj} has variance
\begin{equation}
    \sum_t^T \sigma ^2 + \sigma ^ 2 = 2 T \sigma^2.
\end{equation}
The former variance is again greater than the latter, completing this case and the proof.
\end{proof}

\clearpage

\section{Dual Algorithms}
\label{app:dual}

\begin{algorithm*}[t]
\begin{algorithmic}
\STATE {\bfseries Input:} Sequence of expert visitation distributions $\rho_{E}^1 \dots \rho_{E}^T$, Policy class $\Pi$, Reward class $\mathcal{F}_r$
\STATE {\bfseries Output:} Sequence of trained policies $\pi = \pi^{1:T}$
\STATE Initialize $\pi_1 = (\pi_0^1, \dots, \pi_0^T) \in \Pi^T$,
\FOR{$i$ in $1 \dots N$}
\STATE \algcomment{Use any no-regret algo to pick $f$, e.g. FTRL}
\STATE $f_{i} \gets \underset{f \in \mathcal{F}_r}{\argmax } \, J(\pi_E, f) - J(\text{Unif}(\pi_{1:i}), f)+ R(f)$
\STATE \algcomment{Perform an expert-competitive response via PSDP}
\STATE $\pi^T_{i+1} = \argmax_{\pi \in \Pi} \mathbb{E}_{s \sim \rho_E^T}[f_i(s, \pi(s))]$
\FOR{$t$ in $T \dots 2$}
\FOR{$j=1$ to $M$}
\STATE Sample a random start state $s_{t-1} \sim \rho_{E}^{t-1}$.
\STATE Execute a random action $a_{t-1} \sim \text{Unif}(\mathcal{A})$ in $s_{t-1}$.
\STATE Follow $\pi^{t:T}_{i+1}$ until the end of the horizon.
\STATE $\mathcal{D}_t \gets \mathcal{D}_t \cup \{(s_t, a_t, s_{t+1:T}, a_{t+1:T})\}$
\ENDFOR
\STATE \algcomment{Use any cost-sensitive classification algo to pick $\pi^{t-1}_{i+1}$}
\STATE \begin{equation}
    \pi^{t-1}_{i+1} = \argmax_{\pi \in \Pi} \mathbb{E}_{s_{t-1} \sim \rho_E^{t-1}}\left[f_i(s_{t-1}, \pi(s_{t-1})) +  \mathbb{E}_{\mathcal{D} \vert s_{t-1}, \pi(s_{t-1})} \left [ \sum_{\tau=t}^T f_i(s_{\tau}, \pi_{\tau}(s_{\tau})) \right]\right]
\end{equation}
\ENDFOR
\ENDFOR
\STATE {\bfseries Return } $\Bar{\pi}_{1:T}$, uniform mixture of $\pi^{1:T}_{i}$.
\end{algorithmic}
\caption{\texttt{MMDP} (Moment Matching by Dynamic Programming): Dual \label{alg:mmdp-dual}}
\end{algorithm*}
\begin{algorithm*}[t]
\begin{algorithmic}
\STATE {\bfseries Input:} Sequence of expert visitation distributions $\rho_{E}^1 \dots \rho_{E}^T$, Policy class $\Pi$, Reward class $\mathcal{F}_r$
\STATE {\bfseries Output:} Sequence of trained policies $\pi = \pi^{1:T}$
\STATE Initialize $\pi_1 = (\pi_0^1, \dots, \pi_0^T) \in \Pi^T$,
\FOR{$i$ in $1 \dots N$}
\STATE \algcomment{Use any no-regret algo to pick $f$, e.g. FTRL}
\STATE $f_{i} \gets \underset{f \in \mathcal{F}_r}{\argmax } \, J(\pi_E, f) - J(\text{Unif}(\pi_{1:i}), f)+ R(f)$
\STATE \algcomment{Perform an expert-competitive response via NRPI}
\STATE Set $\mathcal{D}_{i} = \{\}$, $\pi_i^1 = \pi_{i-1}$.
\FOR{$j=1$ to $M$}
\STATE Sample random time $t \sim \text{Unif}([0, T])$ and start state $s_t \sim \rho_{E}^t$.
\STATE Execute a random action $a_t \sim \text{Unif}(\mathcal{A})$ in $s_t$.
\STATE Follow $\pi_{i}^j$ until the end of the horizon.
\STATE $\mathcal{D}_{i} \gets \mathcal{D}_{i} \cup \{(s_t, a_t, t, \hat{Q}_t = \sum_{\tau =t}^T f_{i}(s_{\tau}, a_{\tau}))\}$
 \STATE \algcomment{Run any no-regret CSC algorithm on $\mathcal{D}_{i}$ to produce new $\pi_i^{j+1}$, e.g. FTRL:}
\STATE Optimize \begin{equation}
    \pi_i^{j+1} \gets \argmax_{\pi \in \Pi} \mathbb{E}_{s \sim \mathcal{D}, a \sim \pi(s)}[\mathbb{E}[\hat{Q}_t|s_t = s, a_t = a]] + H(\pi).
\end{equation}
\ENDFOR
\STATE Select $\pi_i$ as best of $\pi_i^{1:M}$ on validation data.
\ENDFOR
\STATE {\bfseries Return } $\Bar{\pi}$, uniform mixture of $\pi_{i}$.
\end{algorithmic}
\caption{\texttt{NRMM} (No Regret Moment Matching): Dual \label{alg:nrmm-dual}}
\end{algorithm*}

We now present the \textit{dual} (i.e. no-regret over rewards rather than over policies) variants of the algorithms in the preceding section.
We first describe our proof strategy for general two-player zero-sum games before specializing to inverse RL.

\subsection{Two Player Zero-Sum Games with Relative Best Responses}

\begin{definition}[Relative Best Response] We say that an oracle $\mathcal{A}_y: \mathcal{X} \to \mathcal{Y}$ satisfies \texttt{RBR[$y_E, \epsilon$]}, if, $\forall x \in \mathcal{X}$, we have that
\begin{equation}
    \ell(x, \mathcal{A}_y(x)) - \ell(x, y_E) \leq \epsilon.
\end{equation}
\end{definition}

\begin{theorem}
    Consider a two-player zero-sum game $\max_{x \in \mathcal{X}} \min_{y \in \mathcal{Y}} \ell(x, y)$ with payoff $\ell$ that is concave in $x$ and convex in $y$. Let $y_E \in \mathcal{Y}$. Given access to a no-regret online convex optimization algorithm $\mathcal{A}_x$ over $\mathcal{X}$ and a \texttt{RBR[$y_E, \epsilon$]} oracle $\mathcal{A}_y$, we are able to compute an average iterate $\bar{y}$ in $N$ rounds such that
    \begin{equation}
        \max_{x \in \mathcal{X}} \ell(x, \bar{y}) - \ell(x, y_E) \leq \epsilon + \frac{\text{Reg}_{\mathcal{X}}(N)}{N}.
    \end{equation}
    \label{thm:rbr}
\end{theorem}

    \begin{proof}
    Define $\ell_i(x) = \ell(x, y_i) - \ell(x, y_E)$. We set $x_n = \mathcal{A}_x(\ell_{1:n-1})$ and $y_n = \mathcal{A}_y(x_n)$.
    
    By the convexity of $\ell$ in $y$ and Jensen's inequality, we have that
    \begin{align}
        \max_{x \in \mathcal{X}} \ell(x, \bar{y}) - \ell(x, y_E) \leq \max_{x \in \mathcal{X}} \frac{1}{N} \sum_{n=1}^N \ell(x, y_n) - \ell(x, y_E).
    \end{align}

        To prove our original claim, it is sufficient to upper-bound
    \begin{align}
         \max_{x \in \mathcal{X}} \frac{1}{N} \sum_{n=1}^N \ell_t(x).
    \end{align}
    From the relative best response property, we directly have that %
    \begin{equation}
        \forall n \in [N], \ell(x_n, y_n) - \ell(x_n, y_E) \leq \epsilon \Rightarrow \forall n \in [N], \ell_n(x_n) \leq \epsilon.
    \end{equation}
   By the definition of regret of $\mathcal{A}_x$, we have that %
    \begin{equation}
        \max_{x \in \mathcal{X}} \frac{1}{N} \sum_{n=1}^N \ell_n(x) - \ell_n(x_n) \leq \frac{\text{Reg}_{\mathcal{X}}(N)}{N}.
    \end{equation}
    Rearranging terms, this tells us that
    \begin{equation}
        \max_{x \in \mathcal{X}} \frac{1}{N} \sum_{n=1}^N \ell_n(x)  \leq \frac{\text{Reg}_{\mathcal{X}}(N)}{N} + \frac{1}{N} \sum_{n=1}^N \ell_n(x_n) \leq \frac{\text{Reg}_{\mathcal{X}}(N)}{N} + \epsilon.
    \end{equation}
    where the last inequality comes from the definition of \texttt{RBR[$y_E, \epsilon$]}. This completes the proof. We note that via the no-regret property of $\mathcal{A}_x$, the first term tends to 0 as $N \to \infty$.
    \end{proof}

\subsection{Inverse Reinforcement Learning with Expert-Competitive Responses}
Our proofs will assume access to an efficient oracle to perform an \textit{expert-competitive response} -- a relative best response where the performance of the learner's policy is measured relative to $\pi_E$.
\begin{definition}[Expert-Competitive Response] We say that a reinforcement learning algorithm $\mathcal{A}: \mathcal{F}_r \to \Pi$ satisfies \texttt{ECR[$\epsilon T^2$]}, if, $\forall r \in \mathcal{F}_r$, we have that
\begin{equation}
    J(\pi_E, r) - J(\mathcal{A}(r), r) \leq \epsilon T^2.
\end{equation}
\end{definition}
For an arbitrary $\epsilon > 0$, both PSDP and NRPI satisfy \texttt{ECR[$\epsilon T^2$]} when run with the expert demonstrations as their reset distribution. Note that for MMDP, $\epsilon$ refers to the cost-sensitive classification error of the inner PSDP loop while for NRMM, $\epsilon$ refers to the regret of the online cost-sensitive classification algorithm.

We now provide a shared policy performance bound proof for both dual algorithms.  %

\begin{theorem}{\textbf{\texttt{MMDP} / \texttt{NRMM} Dual Variant Upper Bound:}} Assume we have access to a no-regret online linear optimization algorithm $\mathcal{A}_f$ over $\mathcal{F}_r$ and an \texttt{ECR[$\epsilon T^2$]} oracle $\mathcal{A}_{\pi}$ over $\Pi$. Define $\ell_i(f) = \frac{1}{T}(J(\pi_E, f) - J(\pi_i, f))$. Set $f_i = \mathcal{A}_f(\ell_{1:i-1})$ and $\pi_i = \mathcal{A}_{\pi}(f_i)$. Then, if we let $\bar{\pi}$ denote the uniform mixture over computed policies, we have that
    \begin{equation}
        J(\pi_E, r) - J(\Bar{\pi}, r) \leq \Bar{\delta} T + \epsilon T^2,
    \end{equation}
    where $\Bar{\delta} = \frac{1}{N} \sum_{n=1}^N \delta_n$ is the average regret of the reward selection algorithm used.
    \label{thm:dual}
\end{theorem}
\begin{proof}
Setting $\mathcal{X} = \mathcal{F}_r$, $\mathcal{Y} = \Pi$, $\ell(r, \pi) = \frac{1}{T} J(\pi, r)$, $y_E = \pi_E$, $\epsilon = \epsilon T$, and applying Theorem \ref{thm:rbr} gives us the claim.

\end{proof}

Observe that both Algorithm \ref{alg:mmdp-dual} and \ref{alg:nrmm-dual} fit into the template defined in the theorem statement. By inspecting our above algorithms, it is clear that our per-iteration sample complexity analysis applies as written so we do not rehash the details. We note however that in contrast to the overall polynomial sample complexity of the primal version of \texttt{MMDP}, we do have to pay linearly in the number of no-regret iterations for the dual version. Similarly, for the dual version of NRMM, we have to pay for the number of no-regret iterations over both strategy spaces.

\clearpage
\section{Experiments}
\label{app:exps}
We use Optimistic Adam \citep{daskalakis2017training} for all policy and discriminator optimization, taking advantage of its speed and last-iterate convergence properties. We use gradient penalties \citep{gulrajani2017improved} to stabilize our discriminator training for all algorithms. Our policies, value functions, and discriminators are all 2-layer ReLu networks with a hidden size of 256. Each outer loop iteration lasts for 5000 steps of environment interaction. We sample 4 trajectories to use in the discriminator update at the end of each outer-loop iteration.

\subsection{PyBullet Tasks}
For the PyBullet tasks (Walker, Hopper, HalfCheetah), we use the Soft Actor Critic \citep{haarnoja2018soft} implementation provided by \citet{raffin2019stable} for policy optimization for both the expert and the learner. We use the hyperparameters in Table \ref{table:stbsln3params} for all experiments. We train behavioral cloning for 100,000 steps.

\begin{table}[h]
\begin{center}
\begin{small}
\begin{sc}
\setlength{\tabcolsep}{2pt}
\begin{tabular}{lccccccccccr}
\toprule
 Parameter & Value \\
\midrule
 buffer size & 300000 \\
 batch size & 256 \\
 $\gamma$ & 0.98 \\
 $\tau$ & 0.02 \\
 Training Freq. & 64 \\
 Gradient Steps & 64 \\
 Learning Rate & Lin. Sched. 7.3e-4 \\
 policy architecture & 256 x 2 \\
 state-dependent exploration & true \\
 training timesteps & 1e6 \\
\bottomrule
\end{tabular}
\end{sc}
\end{small}
\end{center}
\caption{\label{table:stbsln3params} Expert and learner hyperparameters for SAC.}
\end{table}

We use $\alpha=0.5$ for both variants of \texttt{FILTER} as we found it to perform better than $\alpha=1$.

For our discriminator, we start with a learning rate of $8e-3$ and decay it linearly over outer-loop iterations.

\subsection{D4RL Tasks}
For the D4RL tasks (both large \texttt{antmazes}), we use the data provided by \citet{fu2020d4rl} as our expert demonstrations. We give all algorithms access to goal information by appending it to the observation. This helps explain why our behavioral cloning baseline significantly out-performs previously published results and might be of independent interest to the Offline RL community. \footnote{We found that on the small and medium mazes, a properly tuned implementation of BC was able to achieve scores upwards of 70.} Importantly, we did not filter the data down whatsoever as in the "\%-BC" approach of \citet{chen2021decision}, so our algorithms are all truly reward-free.

For our policy optimizer, we build upon the TD3+BC implementation of \citet{fujimoto2021minimalist} with the default hyperparameters. For behavioral cloning, we run the optimizer for 500k steps while zeroing out the component of the actor update that depends on rewards. 

For \texttt{MM} and \texttt{FILTER}, we pre-train the policy with 10,000 steps of behavioral cloning. We use a dual replay buffer strategy, similar to that of \citet{hester2018deep, reddy2019sqil, swamy2021moments}. One buffer contains expert demonstrations while the other contains learner rollouts. We sample a batch from one with equal probability for each policy update. For samples from the expert buffer, we use the current discriminator to impute rewards and use the BC regularizer term. For samples from the learner's buffer, we use the recorded discriminator values and turn off the BC regularizer. We use $\alpha=1$ for \texttt{FILTER} (i.e. \texttt{NRMM}).

For our discriminator, we start with a learning rate of $8e-4$ and decay it linearly over outer-loop iterations.

\end{document}